\providecommand{\tabularnewline}{\\}
\providecommand{\algorithmname}{Algorithm}
\theoremstyle{plain}
\newtheorem{thm}{\protect\theoremname}
  \theoremstyle{plain}
  \newtheorem{prop}[thm]{\protect\propositionname}
  \theoremstyle{plain}
  \newtheorem{lem}[thm]{\protect\lemmaname}
  \theoremstyle{plain}
  \newtheorem{cor}[thm]{\protect\corollaryname}
  \providecommand{\corollaryname}{Corollary}
  \providecommand{\lemmaname}{Lemma}
  \providecommand{\propositionname}{Proposition}
\providecommand{\theoremname}{Theorem}
\begin{document}

\title{A Simple Practical Accelerated Method for Finite Sums}

\author{Aaron Defazio\\
Ambiata, Sydney Australia}
\maketitle
\begin{abstract}
We describe a novel optimization method for finite sums (such as empirical
risk minimization problems) building on the recently introduced SAGA
method. Our method achieves an accelerated convergence rate on strongly
convex smooth problems. Our method has only one parameter (a step
size), and is radically simpler than other accelerated methods for
finite sums. Additionally it can be applied when the terms are non-smooth,
yielding a method applicable in many areas where operator splitting
methods would traditionally be applied.
\end{abstract}

\section*{Introduction}

A large body of recent developments in optimization have focused on
minimization of convex finite sums of the form:
\[
f(x)=\frac{1}{n}\sum_{i=1}^{n}f_{i}(x),
\]
a very general class of problems including the empirical risk minimization
(ERM) framework as a special case. Any function $h$ can be written
in this form by setting $f_{1}(x)=h(x)$ and $f_{i}=0$ for $i\neq1$,
however when each $f_{i}$ is sufficiently regular in a way that can
be made precise, it is possible to optimize such sums more efficiently
than by treating them as black box functions.

In most cases recently developed methods such as SAG \citep{SAG}
can find an $\epsilon$-minimum faster than either stochastic gradient
descent or accelerated black-box approaches, both in theory and in
practice. We call this class of methods fast incremental gradient
methods (FIG).

FIG methods are randomized methods similar to SGD, however unlike
SGD they are able to achieve linear convergence rates under Lipschitz-smooth
and strong convexity conditions \citep{miso2,finito,svrg,semi}. The
linear rate in the first wave of FIG methods directly depended on
the condition number ($L/\mu$) of the problem, whereas recently several
methods have been developed that depend on the square-root of the
condition number \citep{lan-accel,mairal-catalyst,sdca-accel,svrg-accel-nitanda}.
Analogous to the black-box case, these methods are known as accelerated
methods. 

In this work we develop another accelerated method, which is conceptually
simpler and requires less tuning than existing accelerated methods.
The method we give is a primal approach, however it makes use of a
proximal operator oracle for each $f_{i}$ instead of a gradient oracle,
unlike other primal approaches. The proximal operator is also used
by dual methods such as some variants of SDCA \citep{SDCA}.

\section{Algorithm}

\begin{algorithm}
Pick some starting point $x^{0}$ and step size $\gamma$. Initialize
each $g_{i}^{0}=f_{i}^{\prime}(x^{0}),$ where $f_{i}^{\prime}(x^{0})$
is any gradient/subgradient at $x^{0}$. 

Then at step $k+1$:
\begin{enumerate}
\item Pick index $j$ from $1$ to $n$ uniformly at random.
\item Update $x$:
\[
z_{j}^{k}=x^{k}+\gamma\left[g_{j}^{k}-\frac{1}{n}\sum_{i=1}^{n}g_{i}^{k}\right],
\]
\[
x^{k+1}=\text{prox}_{j}^{\gamma}\left(z_{j}^{k}\right).
\]

\item Update the gradient table: Set $g_{j}^{k+1}=\frac{1}{\gamma}\left(z_{j}^{k}-x^{k+1}\right)$,
and leave the rest of the entries unchanged ($g_{i}^{k+1}=g_{i}^{k}$
for $i\neq j$).
\end{enumerate}
\caption{\label{alg:algorithm1}}
\end{algorithm}

Our algorithm's main step makes use of the proximal operator for a
randomly chosen $f_{i}$. For convenience, we use the following compact
notation:
\[
\text{prox}_{i}^{\gamma}\left(x\right)=\text{arg\ensuremath{\min}}_{y}\left\{ \gamma f_{i}(y)+\frac{1}{2}\left\Vert x-y\right\Vert ^{2}\right\} .
\]
This proximal operator can be computed efficiently or in closed form
in many cases, see Section \ref{sec:implementation} for details.
Like SAGA, we also maintain a table of gradients $g_{i}$, one for
each function $f_{i}$. We denote the state of $g_{i}$ at the end
of step $k$ by $g_{i}^{k}$. The iterate (our guess at the solution)
at the end of step $k$ is denoted $x^{k}.$ The starting iterate
$x^{0}$ may be chosen arbitrarily.

The full algorithm is given as Algorithm \ref{alg:algorithm1}. The
sum of gradients $\frac{1}{n}\sum_{i=1}^{n}g_{i}^{k}$ can be cached
and updated efficiently at each step, and in most cases instead of
storing a full vector for each $g_{i}$, only a single real value
needs to be stored. This is the case for linear regression or binary
classification with logistic loss or hinge loss, in precisely the
same way as for standard SAGA. A discussion of further implementation
details is given in Section \ref{sec:implementation}.

With step size 
\begin{eqnarray*}
\gamma & = & \frac{\sqrt{(n-1)^{2}+4n\frac{L}{\mu}}}{2Ln}-\frac{1-\frac{1}{n}}{2L},
\end{eqnarray*}
the expected convergence rate in terms of squared distance to the
solution is given by: 
\[
E\left\Vert x^{k}-x^{*}\right\Vert ^{2}\leq\left(1-\frac{\mu\gamma}{1+\mu\gamma}\right)^{k}\frac{\mu+L}{\mu}\left\Vert x^{0}-x^{*}\right\Vert ^{2},
\]
when each $f_{i}:\mathbb{R}^{d}\rightarrow\mathbb{R}$ is $L$-smooth
and $\mu$-strongly convex. See \citet{nes-book} for definitions
of these conditions. Using big-O notation, the number of steps required
to reduce the distance to the solution by a factor $\epsilon$ is:
\[
k=O\left(\left(\sqrt{\frac{nL}{\mu}}+n\right)\log\left(\frac{1}{\epsilon}\right)\right),
\]
as $\epsilon\rightarrow0$. This rate matches the lower bound known
for this problem \citep{lan-accel} under the gradient oracle. We
conjecture that this rate is optimal under the proximal operator oracle
as well. Unlike other accelerated approaches though, we have only
a single tunable parameter (the step size $\gamma$), and the algorithm
doesn't need knowledge of $L$ or $\mu$ except for their appearance
in the step size.

Compared to the $O\left(\left(L/\mu+n\right)\log\left(1/\epsilon\right)\right)$
rate for SAGA and other non-accelerated FIG methods, accelerated FIG
methods are significantly faster when $n$ is small compared to $L/\mu$,
however for $n\geq L/\mu$ the performance is essentially the same.
All known FIG methods hit a kind of wall at $n\approx L/\mu$, where
they decrease the error at each step by no more than $1-\frac{1}{n}$.
Indeed, when $n\geq L/\mu$ the problem is so well conditioned so
as to be easy for any FIG method to solve it efficiently. This is
sometimes called the big data setting \citep{finito}.

Our convergence rate can also be compared to that of optimal first-order
black box methods, which have rates of the form $k=O\left(\left(\sqrt{L/\mu}\right)\log\left(1/\epsilon\right)\right)$
per epoch equivalent. We are able to achieve a $\sqrt{n}$ speedup
on a per-epoch basis, for $n$ not too large. Of course, all of the
mentioned rates are significantly better than the $O\left(\left(L/\mu\right)\log\left(1/\epsilon\right)\right)$
rate of gradient descent. 

For non-smooth but strongly convex problems, we prove a $1/\epsilon$-type
rate under a standard iterate averaging scheme. This rate does not
require the use of decreasing step sizes, so our algorithm requires
less tuning than other primal approaches on non-smooth problems.

\section{Relation to other approaches}

Our method is most closely related to the SAGA method. To make the
relation clear, we may write our method's main step as:
\[
x^{k+1}=x^{k}-\gamma\left[f_{j}^{\prime}(x^{k+1})-g_{j}^{k}+\frac{1}{n}\sum_{i=1}^{n}g_{i}^{k}\right],
\]
whereas SAGA has a step of the form:
\[
x^{k+1}=x^{k}-\gamma\left[f_{j}^{\prime}(x^{k})-g_{j}^{k}+\frac{1}{n}\sum_{i=1}^{n}g_{i}^{k}\right].
\]
The difference is the point at which the gradient of $f_{j}$ is evaluated
at. The proximal operator has the effect of evaluating the gradient
at $x^{k+1}$ instead of $x^{k}$. While a small difference on the
surface, this change has profound effects. It allows the method to
be applied directly to non-smooth problems using fixed step sizes,
a property not shared by SAGA or other primal FIG methods. Additionally,
it allows for much larger step sizes to be used, which is why the
method is able to achieve an accelerated rate.

It is also illustrative to look at how the methods behave at $n=1$.
SAGA degenerates into regular gradient descent, whereas our method
becomes the proximal-point method \citep{rockafellar1976monotone}:
\[
x^{k+1}=\text{prox}_{\gamma f}(x^{k}).
\]
The proximal point method has quite remarkable properties. For strongly
convex problems, it converges \emph{for any} $\gamma>0$ at a linear
rate. The downside being the inherent difficulty of evaluating the
proximal operator. For the $n=2$ case, if each term is an indicator
function for a convex set, our algorithm matches Dykstra's projection
algorithm if we take $\gamma=2$ and use cyclic instead of random
steps.

\subsection*{Accelerated incremental gradient methods}

Several acceleration schemes have been recently developed as extensions
of non-accelerated FIG methods. The earliest approach developed was
the ASDCA algorithm \citep{accel-sdca,sdca-accel}. The general approach
of applying the proximal-point method as the outer-loop of a double-loop
scheme has been dubbed the Catalyst algorithm \citet{mairal-catalyst}.
It can be applied to accelerate any FIG method. Recently a very interesting
primal-dual approach has been proposed by \citet{lan-accel}. All
of the prior accelerated methods are significantly more complex than
the approach we propose, and have more complex proofs.

\section{Theory}

\subsection{Proximal operator bounds}

In this section we rehash some simple bounds from proximal operator
theory that we will use in this work. Define the short-hand $p_{\gamma f}(x)=\text{prox}_{\gamma f}(x)$,
and let $g_{\gamma f}(x)=\frac{1}{\gamma}\left(x-p_{\gamma f}(x)\right)$,
so that $p_{\gamma f}(x)=x-\gamma g_{\gamma f}(x)$. Note that $g_{\gamma f}(x)$
is a subgradient of $f$ at the point $p_{\gamma f}(x)$. This relation
is known as the\textbf{ optimality condition} of the proximal operator.
Note that proofs for the following two propositions are in the supplementary
material.
\begin{prop}
\label{thm:firm-nonexpansiveness}\textbf{(Strengthening firm non-expansiveness
under strong convexity)} For any $x,y\in\mathbb{R}^{d}$, and any
convex function $f:\mathbb{R}^{d}\rightarrow\mathbb{R}$ with strong
convexity constant $\mu\geq0$, 
\[
\left\langle x-y,p_{\gamma f}(x)-p_{\gamma f}(y)\right\rangle \geq(1+\mu\gamma)\left\Vert p_{\gamma f}(x)-p_{\gamma f}(y)\right\Vert ^{2}.
\]

In operator theory this property is known as $(1+\mu\gamma)$-cocoerciveness
of $p_{\gamma f}$.
\end{prop}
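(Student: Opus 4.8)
The plan is to reduce the statement to the strong monotonicity of the subdifferential of a $\mu$-strongly convex function, using only the optimality condition of the proximal operator quoted just above. First I would set $u=p_{\gamma f}(x)$ and $v=p_{\gamma f}(y)$. By the optimality condition, $\frac{1}{\gamma}(x-u)=g_{\gamma f}(x)$ is a subgradient of $f$ at $u$, and likewise $\frac{1}{\gamma}(y-v)$ is a subgradient of $f$ at $v$.

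Next I would invoke strong monotonicity of $\partial f$: for any $s_u\in\partial f(u)$ and $s_v\in\partial f(v)$ one has $\langle s_u-s_v,\,u-v\rangle\ge\mu\left\Vert u-v\right\Vert^{2}$. This is immediate from adding the two subgradient inequalities $f(v)\ge f(u)+\langle s_u,v-u\rangle+\tfrac{\mu}{2}\left\Vert u-v\right\Vert^{2}$ and its symmetric counterpart. Applying it with $s_u=\frac{1}{\gamma}(x-u)$ and $s_v=\frac{1}{\gamma}(y-v)$, and then multiplying through by $\gamma>0$, yields
\[
\left\langle (x-y)-(u-v),\,u-v\right\rangle \ge \mu\gamma\left\Vert u-v\right\Vert^{2}.
\]

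Finally I would expand the left-hand side as $\langle x-y,\,u-v\rangle-\left\Vert u-v\right\Vert^{2}$ and rearrange, which gives exactly $\langle x-y,\,u-v\rangle\ge(1+\mu\gamma)\left\Vert u-v\right\Vert^{2}$, the claimed inequality. I do not anticipate a genuine obstacle here; the only points requiring minor care are the degenerate case $\mu=0$ (where the statement collapses to the classical firm nonexpansiveness / $1$-cocoerciveness of $p_{\gamma f}$, and the argument above still goes through with ordinary monotonicity of $\partial f$), and the fact that $f$ need not be differentiable, so one must genuinely work with the subgradients supplied by the optimality condition rather than with gradients.
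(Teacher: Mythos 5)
Your proposal is correct and follows essentially the same route as the paper's proof: both apply the strong monotonicity of the subdifferential (the paper's Equation \ref{eq:sc-ip-bound}) to the subgradients $g_{\gamma f}(x),g_{\gamma f}(y)$ at the points $p_{\gamma f}(x),p_{\gamma f}(y)$, multiply by $\gamma$, and rearrange using the optimality condition $x=p_{\gamma f}(x)+\gamma g_{\gamma f}(x)$. The only cosmetic difference is that you also sketch the standard derivation of strong monotonicity from the two strong-convexity subgradient inequalities, which the paper simply cites without proof.
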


\begin{prop}
\textbf{(Moreau decomposition)} For any $x\in\mathbb{R}^{d}$, and
any convex function $f:\mathbb{R}^{d}\rightarrow\mathbb{R}$ with
Fenchel conjugate $f^{*}$ :
\begin{equation}
p_{\gamma f}(x)=x-\gamma p_{\frac{1}{\gamma}f^{*}}(x/\gamma).\label{eq:Moreau}
\end{equation}

Recall our definition of $g_{\gamma f}(x)=\frac{1}{\gamma}\left(x-p_{\gamma f}(x)\right)$
also. After combining, the following relation thus holds between the
proximal operator of the conjugate $f^{*}$ and $g_{\gamma f}$:
\begin{equation}
p_{\frac{1}{\gamma}f^{*}}(x/\gamma)=\frac{1}{\gamma}\left(x-p_{\gamma f}(x)\right)=g_{\gamma f}(x).\label{eq:gconj}
\end{equation}
\end{prop}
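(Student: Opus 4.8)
The plan is to obtain \eqref{eq:Moreau} directly from the proximal optimality condition recalled just above the proposition, combined with the conjugate--subgradient correspondence $v\in\partial f(u)\Leftrightarrow u\in\partial f^{*}(v)$. First I would introduce the shorthand $u=p_{\gamma f}(x)$ and $v=g_{\gamma f}(x)=\frac{1}{\gamma}(x-u)$, so that by construction $x=u+\gamma v$. The optimality condition says precisely that $v\in\partial f(u)$. Since $f:\mathbb{R}^{d}\to\mathbb{R}$ is finite-valued and convex it is proper, continuous and convex, so the Fenchel--Young (conjugate--subgradient) theorem applies and yields $u\in\partial f^{*}(v)$; note that $f^{*}$ may be extended-real-valued, but this causes no difficulty in defining its proximal operator.

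Next I would verify that $v$ is exactly the point defining $p_{\frac{1}{\gamma}f^{*}}(x/\gamma)$. Writing the first-order optimality condition for $\min_{w}\{\frac{1}{\gamma}f^{*}(w)+\frac{1}{2}\Vert x/\gamma-w\Vert^{2}\}$ and multiplying through by $\gamma$, a point $w$ is the minimizer if and only if $x-\gamma w\in\partial f^{*}(w)$, and this minimizer is unique because the objective is $1$-strongly convex (thanks to the quadratic term). Substituting $w=v$, the required condition reads $x-\gamma v\in\partial f^{*}(v)$; but $x-\gamma v=u$, so this is exactly $u\in\partial f^{*}(v)$, which was established in the previous step. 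Hence $v=p_{\frac{1}{\gamma}f^{*}}(x/\gamma)$.

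It then remains only to reassemble the pieces: $\gamma\,p_{\frac{1}{\gamma}f^{*}}(x/\gamma)=\gamma v=x-u=x-p_{\gamma f}(x)$, which rearranges to \eqref{eq:Moreau}. Relation \eqref{eq:gconj} is immediate afterwards, since $g_{\gamma f}(x)=\frac{1}{\gamma}(x-p_{\gamma f}(x))=v$ by definition and we have just identified $v$ with $p_{\frac{1}{\gamma}f^{*}}(x/\gamma)$.

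The only genuinely delicate step is the appeal to the conjugate--subgradient theorem, which is where regularity of $f$ matters; here it is automatic because $f$ is real-valued convex on all of $\mathbb{R}^{d}$, hence continuous. Everything else is bookkeeping with the two optimality conditions and the identity $x-\gamma v=u$. An alternative route would be to first establish the unscaled Moreau identity $p_{f}(x)+p_{f^{*}}(x)=x$ and then substitute $f\mapsto\gamma f$ together with the change of variable $x\mapsto x/\gamma$, but the direct argument above is shorter and stays within the notation already set up in this section.
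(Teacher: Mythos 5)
Your proposal is correct and follows essentially the same route as the paper's own proof: setting $u=p_{\gamma f}(x)$, $v=\frac{1}{\gamma}(x-u)$, using the proximal optimality condition to get $v\in\partial f(u)$, passing to $u\in\partial f^{*}(v)$ by conjugacy, and reading this off as the optimality condition of $p_{\frac{1}{\gamma}f^{*}}(x/\gamma)$. The only difference is that you spell out the uniqueness of the conjugate prox minimizer and the continuity of $f$ justifying the subgradient correspondence, which the paper leaves implicit.
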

\begin{thm}
\label{thm:g-bound}For any $x,y\in\mathbb{R}^{d}$, and any convex
$L$-smooth function $f:\mathbb{R}^{d}\rightarrow\mathbb{R}$:

\[
\left\langle g_{\gamma f}(x)-g_{\gamma f}(y),x-y\right\rangle \geq\gamma\left(1+\frac{1}{L\gamma}\right)\left\Vert g_{\gamma f}(x)-g_{\gamma f}(y)\right\Vert ^{2},
\]
\end{thm}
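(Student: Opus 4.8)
The plan is to dualize, turning a statement about $g_{\gamma f}$ into a statement about a proximal operator of $f^{*}$ and then appealing to Proposition~\ref{thm:firm-nonexpansiveness}. The two ingredients are already on the table: first, relation~(\ref{eq:gconj}), which tells us $g_{\gamma f}(x)=p_{\frac{1}{\gamma}f^{*}}(x/\gamma)$; second, the standard conjugacy fact that since $f$ is convex and $L$-smooth, its Fenchel conjugate $f^{*}$ is $\frac{1}{L}$-strongly convex (see \citet{nes-book}). Given these, the idea is simply to write out the cocoercivity estimate of Proposition~\ref{thm:firm-nonexpansiveness} for the function $f^{*}$ with strong convexity constant $\mu'=1/L$ and step-size parameter $\gamma'=1/\gamma$, and then substitute.

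Concretely, I would set $u=x/\gamma$ and $v=y/\gamma$ and apply Proposition~\ref{thm:firm-nonexpansiveness} to $f^{*}$, obtaining
\[
\left\langle u-v,\; p_{\frac{1}{\gamma}f^{*}}(u)-p_{\frac{1}{\gamma}f^{*}}(v)\right\rangle \;\geq\;\left(1+\tfrac{1}{L\gamma}\right)\left\Vert p_{\frac{1}{\gamma}f^{*}}(u)-p_{\frac{1}{\gamma}f^{*}}(v)\right\Vert ^{2},
\]
since $1+\mu'\gamma'=1+\tfrac{1}{L\gamma}$. Now invoke (\ref{eq:gconj}) on both sides: $p_{\frac{1}{\gamma}f^{*}}(u)=g_{\gamma f}(x)$ and likewise at $y$, while $u-v=(x-y)/\gamma$. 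The left-hand side becomes $\tfrac{1}{\gamma}\langle x-y,\,g_{\gamma f}(x)-g_{\gamma f}(y)\rangle$ and the right-hand side becomes $\left(1+\tfrac{1}{L\gamma}\right)\Vert g_{\gamma f}(x)-g_{\gamma f}(y)\Vert^{2}$. Multiplying through by $\gamma>0$ gives exactly the claimed inequality.

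The only step with genuine content, as opposed to bookkeeping, is the duality fact that an $L$-smooth convex function has a $\frac{1}{L}$-strongly convex conjugate; I would either cite it or derive it from the quadratic upper bound $f(y)\le f(x)+\langle f'(x),y-x\rangle+\tfrac{L}{2}\Vert y-x\Vert^{2}$ together with the definition of $f^{*}$. The remaining care needed is just in the rescaling of the proximal operator: both its argument ($x\mapsto x/\gamma$) and its parameter ($\gamma\mapsto 1/\gamma$) change simultaneously, so one must track factors of $\gamma$ carefully, but no real obstacle arises. (If one wants to be fully rigorous about domains, note that $f^{*}$ may be extended-real-valued, so strictly speaking Proposition~\ref{thm:firm-nonexpansiveness} should be read for closed proper convex functions; its proof via the optimality condition of the prox goes through unchanged, so this is a cosmetic rather than substantive point.)
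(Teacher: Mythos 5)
Your proposal is correct and is essentially identical to the paper's own proof: it applies the cocoerciveness bound of Proposition~\ref{thm:firm-nonexpansiveness} to $f^{*}$ (using that $L$-smoothness of $f$ gives $\frac{1}{L}$-strong convexity of $f^{*}$) at the points $x/\gamma$ and $y/\gamma$, then rewrites via Equation~\ref{eq:gconj} and rescales by $\gamma$. The only difference is cosmetic, namely your added remark about extended-real-valued conjugates, which the paper glosses over.
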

\begin{proof}
We will apply cocoerciveness of the proximal operator of $f^{*}$
as it appears in the decomposition. Note that L-smoothness of $f$
implies $1/L$-strong convexity of $f^{*}$. In particular we apply
it to the points $\frac{1}{\gamma}x$ and $\frac{1}{\gamma}y$:
\[
\left\langle p_{\frac{1}{\gamma}f^{*}}(\frac{1}{\gamma}x)-p_{\frac{1}{\gamma}f^{*}}(\frac{1}{\gamma}y),\frac{1}{\gamma}x-\frac{1}{\gamma}y\right\rangle \geq\left(1+\frac{1}{L\gamma}\right)\left\Vert p_{\frac{1}{\gamma}f^{*}}(\frac{1}{\gamma}x)-p_{\frac{1}{\gamma}f^{*}}(\frac{1}{\gamma}y)\right\Vert ^{2}.
\]

Pulling $\frac{1}{\gamma}$ from the right side of the inner product
out, and plugging in Equation \ref{eq:gconj}, gives the result.
\end{proof}

\subsection{Notation}

Let $x^{*}$ be the unique minimizer (due to strong convexity) of
$f$. In addition to the notation used in the description of the algorithm,
we also fix a set of subgradients $g_{j}^{*}$, one for each of $f_{j}$
at $x^{*}$, chosen such that $\sum_{j=1}^{n}g_{j}^{*}=0$. We also
define $v_{j}=x^{*}+\gamma g_{j}^{*}.$ Note that at the solution
$x^{*}$, we want to apply a proximal step for component $j$ of the
form:
\[
x^{*}=\text{prox}_{j}^{\gamma}\left(x^{*}+\gamma g_{j}^{*}\right)=\text{prox}_{j}^{\gamma}\left(v_{j}\right).
\]
\begin{table*}
\noindent \centering{}%
\begin{tabular}{|c|c|c|}
\hline 
Notation & Description & Additional relation\tabularnewline
\hline 
\hline 
$x^{k}$ & Current iterate at step $k$ & $x^{k}\in R^{d}$\tabularnewline
\hline 
$x^{*}$ & Solution & $x^{*}\in R^{d}$\tabularnewline
\hline 
$\gamma$  & Step size & \tabularnewline
\hline 
$p_{\gamma f}(x)$ & Short-hand in results for generic $f$ & $p_{\gamma f}(x)=\text{prox}_{\gamma f}(x)$\tabularnewline
\hline 
$\text{prox}_{i}^{\gamma}\left(x\right)$ & Proximal operator of $\gamma f_{i}$ at $x$ & $=\text{arg\ensuremath{\min}}_{y}\left\{ \gamma f_{i}(y)+\frac{1}{2}\left\Vert x-y\right\Vert ^{2}\right\} $\tabularnewline
\hline 
$g_{i}^{k}$ & A stored subgradient of $f_{i}$ as seen at step $k$ & \tabularnewline
\hline 
$g_{i}^{*}$ & A subgradient of $f_{i}$ at $x^{*}$ & $\sum_{i=1}^{n}g_{i}^{*}=0$\tabularnewline
\hline 
$v_{i}$ & $v_{i}=x^{*}+\gamma g_{i}^{*}$ & $x^{*}=\text{prox}_{i}^{\gamma}\left(v_{i}\right)$\tabularnewline
\hline 
$j$ & Chosen component index (random variable) & \tabularnewline
\hline 
$z_{j}^{k}$ & $z_{j}^{k}=x^{k}+\gamma\left[g_{j}^{k}-\frac{1}{n}\sum_{i=1}^{n}g_{i}^{k}\right]$ & $x_{j}^{k+1}=\text{prox}_{j}^{\gamma}\left(z_{j}^{k}\right)$\tabularnewline
\hline 
\end{tabular}\caption{Notation quick reference}
\end{table*}

\begin{lem}
(Technical lemma needed by main proof) \label{lem:inner-lemma}Under
Algorithm \ref{alg:algorithm1}, taking the expectation over the random
choice of $j$, conditioning on $x^{k}$ and each $g_{i}^{k}$, allows
us to bound the following inner product at step $k$:
\begin{alignat*}{1}
 & E\left\langle \gamma\left[g_{j}^{k}-\frac{1}{n}\sum_{i=1}^{n}g_{i}^{k}\right]-\gamma g_{j}^{*},\left(x^{k}-x^{*}\right)+\gamma\left[g_{j}^{k}-\frac{1}{n}\sum_{i=1}^{n}g_{i}^{k}\right]-\gamma g_{j}^{*}\right\rangle \\
 & \leq\gamma^{2}\frac{1}{n}\sum_{i=1}^{n}\left\Vert g_{i}^{k}-g_{i}^{*}\right\Vert ^{2}.
\end{alignat*}
The proof is in the supplementary material.
\end{lem}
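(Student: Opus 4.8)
The plan is to recognize this inequality as the standard variance-reduction identity underlying SAGA-type analyses, so that it collapses to a one-line second-moment bound once the cross term is killed by the expectation. First I would abbreviate $\bar g^{k}=\frac{1}{n}\sum_{i=1}^{n}g_{i}^{k}$ and record that, by our choice $\sum_{i=1}^{n}g_{i}^{*}=0$, the corresponding average $\bar g^{*}:=\frac{1}{n}\sum_{i=1}^{n}g_{i}^{*}$ vanishes. Writing the first argument of the inner product as
\[
a_{j}=\gamma\left[g_{j}^{k}-\bar g^{k}\right]-\gamma g_{j}^{*},
\]
the quantity to be bounded is $E\langle a_{j},(x^{k}-x^{*})+a_{j}\rangle=E\langle a_{j},x^{k}-x^{*}\rangle+E\|a_{j}\|^{2}$, where everything except the index $j$ is held fixed by the conditioning.

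Next I would compute $E a_{j}$. Since $E_{j}g_{j}^{k}=\bar g^{k}$ and $E_{j}g_{j}^{*}=\bar g^{*}=0$, and $\bar g^{k}$ is deterministic given the conditioning, we get $E a_{j}=\gamma\bar g^{k}-\gamma\bar g^{k}-0=0$. Because $x^{k}-x^{*}$ is fixed, this immediately gives $E\langle a_{j},x^{k}-x^{*}\rangle=\langle E a_{j},x^{k}-x^{*}\rangle=0$, so only the term $E\|a_{j}\|^{2}$ survives.

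Finally I would bound $E\|a_{j}\|^{2}$. Setting $w_{j}=g_{j}^{k}-g_{j}^{*}$ we have $a_{j}=\gamma(w_{j}-\bar g^{k})$ and, using $\bar g^{*}=0$ once more, $E w_{j}=\bar g^{k}-\bar g^{*}=\bar g^{k}$; hence $a_{j}=\gamma(w_{j}-E w_{j})$. The elementary identity $E\|w_{j}-E w_{j}\|^{2}=E\|w_{j}\|^{2}-\|E w_{j}\|^{2}\le E\|w_{j}\|^{2}$ then yields
\[
E\|a_{j}\|^{2}=\gamma^{2}\left(E\|w_{j}\|^{2}-\|\bar g^{k}\|^{2}\right)\le\gamma^{2}E\|w_{j}\|^{2}=\gamma^{2}\frac{1}{n}\sum_{i=1}^{n}\left\Vert g_{i}^{k}-g_{i}^{*}\right\Vert ^{2},
\]
which is exactly the claimed bound.

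I do not expect a real obstacle here: the only things to be careful about are that the conditioning makes $x^{k}$, $\bar g^{k}$ and every $g_{i}^{k}$ non-random (so the cross term is genuinely zero in expectation), and that uniform sampling of $j$ turns $E\|w_{j}\|^{2}$ into the average over $i$. The slack in the inequality is precisely the discarded variance term $-\gamma^{2}\|\bar g^{k}\|^{2}$, which one could keep if a tighter statement were ever needed.
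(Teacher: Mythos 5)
Your proposal is correct and follows essentially the same route as the paper's proof: split off the $x^{k}-x^{*}$ cross term, note it vanishes in expectation because $E\bigl[\gamma(g_{j}^{k}-\frac{1}{n}\sum_{i}g_{i}^{k})-\gamma g_{j}^{*}\bigr]=0$, and bound the remaining squared-norm term via the variance inequality $E\|X-E[X]\|^{2}\le E\|X\|^{2}$ with $X=g_{j}^{k}-g_{j}^{*}$. No gaps; the observation about the discarded $-\gamma^{2}\|\bar g^{k}\|^{2}$ slack is a nice extra but not needed.
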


\subsection{Main result }
\begin{thm}
\textbf{(single step Lyapunov descent)} \label{thm:main-thm}We define
the Lyapunov function $T^{k}$ of our algorithm (Point-SAGA) at step
$k$ as:
\[
T^{k}=\frac{c}{n}\sum_{i=1}^{n}\left\Vert g_{i}^{k}-g_{i}^{*}\right\Vert ^{2}+\left\Vert x^{k}-x^{*}\right\Vert ^{2},
\]

for $c=1/\mu L$. Then using step size $\gamma=\frac{\sqrt{(n-1)^{2}+4n\frac{L}{\mu}}}{2Ln}-\frac{1-\frac{1}{n}}{2L}$,
the expectation of $T^{k+1}$, over the random choice of $j$, conditioning
on $x^{k}$ and each $g_{i}^{k}$, is:
\[
E\left[T^{k+1}\right]\leq\left(1-\kappa\right)T^{k}\quad\text{for }\kappa=\frac{\mu\gamma}{1+\mu\gamma},
\]
when each $f_{i}:\mathbb{R}^{d}\rightarrow\mathbb{R}$ is $L$-smooth
and $\mu$-strongly convex and $0<\mu<L$. This is the same Lyapunov
function as used by \citet{lacoste-neighbors}.\end{thm}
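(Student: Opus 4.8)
The plan is to bound the conditional expectation $E[T^{k+1}]$ (over the uniform choice of $j$, given $x^k$ and the table $\{g_i^k\}$) term by term, exploiting that one proximal step changes only the $j$-th table entry. First I would split $T^{k+1}=\|x^{k+1}-x^*\|^2+\frac{c}{n}\sum_i\|g_i^{k+1}-g_i^*\|^2$. For the table part, since $g_i^{k+1}=g_i^k$ for $i\neq j$, averaging over $j$ gives
\[
E\Bigl[\frac{c}{n}\sum_i\|g_i^{k+1}-g_i^*\|^2\Bigr]=c\Bigl(1-\frac1n\Bigr)\frac1n\sum_i\|g_i^k-g_i^*\|^2+\frac{c}{n^2}\sum_j\|g_j^{k+1}-g_j^*\|^2,
\]
where $g_j^{k+1}$ in the last sum denotes the entry that would result from selecting index $j$. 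The only genuinely new quantity is $\frac1n\sum_j\|g_j^{k+1}-g_j^*\|^2$, which must be controlled (ideally eliminated) at the end.

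For the iterate part, the algorithm step and the definitions give $x^{k+1}=p_{\gamma f_j}(z_j^k)$, $x^*=p_{\gamma f_j}(v_j)$, $g_j^{k+1}=g_{\gamma f_j}(z_j^k)$, $g_j^*=g_{\gamma f_j}(v_j)$, hence the identity $z_j^k-v_j=(x^{k+1}-x^*)+\gamma(g_j^{k+1}-g_j^*)$. Applying Proposition \ref{thm:firm-nonexpansiveness} to $p_{\gamma f_j}$ at the points $z_j^k,v_j$, and expanding $\langle z_j^k-v_j,\,x^{k+1}-x^*\rangle$ using that identity, yields
\[
(1+\mu\gamma)\|x^{k+1}-x^*\|^2\le\|z_j^k-v_j\|^2-\gamma\bigl\langle z_j^k-v_j,\,g_j^{k+1}-g_j^*\bigr\rangle.
\]
Theorem \ref{thm:g-bound}, applied to $f_j$ at $z_j^k,v_j$, lower-bounds the remaining inner product by $(\gamma+\tfrac1L)\|g_j^{k+1}-g_j^*\|^2$, so $\|x^{k+1}-x^*\|^2\le\frac{1}{1+\mu\gamma}\bigl[\|z_j^k-v_j\|^2-\gamma(\gamma+\tfrac1L)\|g_j^{k+1}-g_j^*\|^2\bigr]$.

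Next I would expand $z_j^k-v_j=(x^k-x^*)+w_j$ with $w_j=\gamma[g_j^k-\frac1n\sum_i g_i^k]-\gamma g_j^*$. Since $\sum_i g_i^*=0$ we have $E[w_j]=0$, so the cross term in $\|(x^k-x^*)+w_j\|^2$ vanishes in expectation, and Lemma \ref{lem:inner-lemma} then gives $E\|z_j^k-v_j\|^2\le\|x^k-x^*\|^2+\gamma^2\frac1n\sum_i\|g_i^k-g_i^*\|^2$. Adding the iterate and table contributions, $E[T^{k+1}]$ is at most a linear combination of $\|x^k-x^*\|^2$, $\frac1n\sum_i\|g_i^k-g_i^*\|^2$, and the new quantity $\frac1n\sum_j\|g_j^{k+1}-g_j^*\|^2$, with coefficients that are explicit rational functions of $\gamma$, $c=1/(\mu L)$, $\mu$, $L$, and $n$.

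It then remains to check two inequalities: that the coefficient of the new quantity, $\frac{c}{n}-\frac{\gamma(\gamma+1/L)}{1+\mu\gamma}$, is $\le0$ (so that term, being nonnegative, can be dropped), and that the coefficients of $\|x^k-x^*\|^2$ and of $\frac1n\sum_i\|g_i^k-g_i^*\|^2$ are at most $1-\kappa=\frac{1}{1+\mu\gamma}$ times their coefficients ($1$ and $c$) in $T^k$. The iterate coefficient is exactly $\frac{1}{1+\mu\gamma}$, so that case is automatic; the table case reduces, after clearing the positive factor $1+\mu\gamma$ and using $c=1/(\mu L)$, to the quadratic inequality $Ln\gamma^2+(n-1)\gamma-\tfrac1\mu\le0$, and the prescribed step size is precisely the positive root of $Ln\gamma^2+(n-1)\gamma-\tfrac1\mu=0$, so it holds with equality. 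The main thing to verify is that this same $\gamma$ makes the new-quantity coefficient not merely nonpositive but exactly zero: feeding the relation $\mu Ln\gamma^2=1-\mu(n-1)\gamma$ (read off from the quadratic) into $1+\mu\gamma\le\mu Ln\gamma^2+\mu n\gamma$ turns it into an identity. With all three coefficients pinned down, $E[T^{k+1}]\le\frac{1}{1+\mu\gamma}\bigl(\|x^k-x^*\|^2+\frac{c}{n}\sum_i\|g_i^k-g_i^*\|^2\bigr)=(1-\kappa)T^k$. The essential difficulty is purely this simultaneous bookkeeping of the quadratic constraints on $\gamma$; everything else is cocoerciveness applied twice plus the zero-mean property of the SAGA-type update.
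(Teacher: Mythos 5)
Your proposal is correct and follows essentially the same route as the paper: the same Lyapunov split, cocoerciveness (Proposition \ref{thm:firm-nonexpansiveness}) applied to $z_j^k$ and $v_j$, Theorem \ref{thm:g-bound} for the $g_j^{k+1}-g_j^*$ term, Lemma \ref{lem:inner-lemma} (via the zero-mean property of the SAGA correction) for the table term, and the identical coefficient bookkeeping leading to $c=1/\mu L$ and the quadratic $Ln\gamma^2+(n-1)\gamma-1/\mu=0$ whose positive root is the prescribed step size. Expanding $\|z_j^k-v_j\|^2$ rather than adding and subtracting $x^k$ inside the inner product is an algebraically equivalent presentation, and your explicit verification that both bracketed coefficients vanish is a welcome elaboration of the paper's ``easy to verify'' remark.
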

\begin{proof}
Term 1 of $T^{k+1}$ is straight-forward to simplify:
\begin{gather*}
\frac{c}{n}E\sum_{i=1}^{n}\left\Vert g_{i}^{k+1}-g_{i}^{*}\right\Vert ^{2}=\left(1-\frac{1}{n}\right)\frac{c}{n}\sum_{i=1}^{n}\left\Vert g_{i}^{k}-g_{i}^{*}\right\Vert ^{2}+\frac{c}{n}E\left\Vert g_{j}^{k+1}-g_{j}^{*}\right\Vert ^{2}.
\end{gather*}
For term $2$ of $T^{k+1}$ we start by applying cocoerciveness (Theorem
\ref{thm:firm-nonexpansiveness}):
\[
(1+\mu\gamma)E\left\Vert x^{k+1}-x^{*}\right\Vert ^{2}
\]
\begin{eqnarray*}
 & = & (1+\mu\gamma)E\left\Vert \text{prox}_{j}^{\gamma}(z_{j}^{k})-\text{prox}_{j}^{\gamma}(v_{j})\right\Vert ^{2}\\
 & \leq & E\left\langle \text{prox}_{j}^{\gamma}(z_{j}^{k})-\text{prox}_{j}^{\gamma}(v_{j}),z_{j}^{k}-v_{j}\right\rangle \\
 & = & E\left\langle x^{k+1}-x^{*}\,,\,z_{j}^{k}-v_{j}\right\rangle .
\end{eqnarray*}
Now we add and subtract $x^{k}:$
\begin{eqnarray*}
 & = & E\left\langle x^{k+1}-x^{k}+x^{k}-x^{*}\,,\,z_{j}^{k}-v_{j}\right\rangle \\
 & = & E\left\langle x^{k}-x^{*}\,,\,z_{j}^{k}-v_{j}\right\rangle +E\left\langle x^{k+1}-x^{k}\,,\,z_{j}^{k}-v_{j}\right\rangle \\
 & = & \left\Vert x^{k}-x^{*}\right\Vert ^{2}+E\left\langle x^{k+1}-x^{k}\,,\,z_{j}^{k}-v_{j}\right\rangle ,
\end{eqnarray*}
where we have pulled out the quadratic term by using $E[z_{j}^{k}-v_{j}]=x^{k}-x^{*}$
(we can take the expectation since the left hand side of the inner
product doesn't depend on $j$). We now expand $E\left\langle x^{k+1}-x^{k}\,,\,z_{j}^{k}-v_{j}\right\rangle $
further:
\[
E\left\langle x^{k+1}-x^{k}\,,\,z_{j}^{k}-v_{j}\right\rangle 
\]
\begin{alignat}{1}
 & =E\left\langle x^{k+1}-\gamma g_{j}^{*}+\gamma g_{j}^{*}-x^{k}\,,\,z_{j}^{k}-v_{j}\right\rangle \nonumber \\
 & =E\left\langle x^{k}-\gamma g_{j}^{k+1}+\gamma\left[g_{j}^{k}-\frac{1}{n}\sum_{i=1}^{n}g_{i}^{k}\right]-\gamma g_{j}^{*}+\gamma g_{j}^{*}-x^{k},\right.\nonumber \\
 & \quad\quad\quad\quad\quad\left.\left(x^{k}-x^{*}\right)+\gamma\left[g_{j}^{k}-\frac{1}{n}\sum_{i=1}^{n}g_{i}^{k}\right]-\gamma g_{j}^{*}\right\rangle .
\end{alignat}
We further split the left side of the inner product to give two separate
inner products:
\begin{alignat}{1}
 & =E\left\langle \gamma\left[g_{j}^{k}-\frac{1}{n}\sum_{i=1}^{n}g_{i}^{k}\right]-\gamma g_{j}^{*},\left(x^{k}-x^{*}\right)+\gamma\left[g_{j}^{k}-\frac{1}{n}\sum_{i=1}^{n}g_{i}^{k}\right]-\gamma g_{j}^{*}\right\rangle \nonumber \\
 & +E\left\langle \gamma g_{j}^{*}-\gamma g_{j}^{k+1},\left(x^{k}-x^{*}\right)+\gamma\left[g_{j}^{k}-\frac{1}{n}\sum_{i=1}^{n}g_{i}^{k}\right]-\gamma g_{j}^{*}\right\rangle .\label{eq:m2}
\end{alignat}
The first inner product in Equation \ref{eq:m2} is the quantity we
bounded in Lemma \ref{lem:inner-lemma} by $\gamma^{2}\frac{1}{n}\sum_{i=1}^{n}\left\Vert g_{i}^{k}-g_{i}^{*}\right\Vert ^{2}$.
The second inner product in Equation \ref{eq:m2}, can be simplified
using Theorem \ref{thm:g-bound} (note the right side of the inner
product is equal to $z_{j}^{k}-v_{j}$):
\[
-\gamma E\left\langle g_{j}^{k+1}-g_{j}^{*}\,,\,z_{j}^{k}-v_{j}\right\rangle \leq-\gamma^{2}\left(1+\frac{1}{L\gamma}\right)E\left\Vert g_{j}^{k+1}-g_{j}^{*}\right\Vert ^{2}.
\]
Combing these gives the following bound on $(1+\mu\gamma)E\left\Vert x^{k+1}-x^{*}\right\Vert ^{2}$:
\[
(1+\mu\gamma)E\left\Vert x^{k+1}-x^{*}\right\Vert ^{2}\leq\left\Vert x^{k}-x^{*}\right\Vert ^{2}+\gamma^{2}\frac{1}{n}\sum_{i=1}^{n}\left\Vert g_{i}^{k}-g_{i}^{*}\right\Vert ^{2}-\gamma^{2}\left(1+\frac{1}{L\gamma}\right)E\left\Vert g_{j}^{k+1}-g_{j}^{*}\right\Vert ^{2}.
\]
Define $\alpha=\frac{1}{1+\mu\gamma}=1-\kappa$, where $\kappa=\frac{\mu\gamma}{1+\mu\gamma}$.
Now we multiply the above inequality through by $\alpha$ and combine
with the rest of the Lyapunov function, giving:
\begin{alignat*}{1}
E\left[T^{k+1}\right] & \leq T^{k}+\left(\alpha\gamma^{2}-\frac{c}{n}\right)\frac{1}{n}\sum_{i}^{n}\left\Vert g_{i}^{k}-g_{i}^{*}\right\Vert ^{2}\\
 & +\left(\frac{c}{n}-\alpha\gamma^{2}-\frac{\alpha\gamma}{L}\right)E\left\Vert g_{j}^{k+1}-g_{j}^{*}\right\Vert ^{2}-\kappa E\left\Vert x^{k}-x^{*}\right\Vert ^{2}.
\end{alignat*}
We want an $\alpha$ convergence rate, so we pull out the required
terms:
\begin{alignat*}{1}
E\left[T^{k+1}\right] & \leq\alpha T^{k}+\left(\alpha\gamma^{2}+\kappa c-\frac{c}{n}\right)\frac{1}{n}\sum_{i}^{n}\left\Vert g_{i}^{k}-g_{i}^{*}\right\Vert ^{2}\\
 & +\left(\frac{c}{n}-\alpha\gamma^{2}-\frac{\alpha\gamma}{L}\right)E\left\Vert g_{j}^{k+1}-g_{j}^{*}\right\Vert ^{2}.
\end{alignat*}
Now to complete the proof we note that $c=1/\mu L$ and $\gamma=\frac{\sqrt{(n-1)^{2}+4n\frac{L}{\mu}}}{2Ln}-\frac{1-\frac{1}{n}}{2L}$
ensure that both terms inside the round brackets are non-positive,
giving $ET^{k+1}\leq\alpha T^{k}$. These constants were found by
equating the equations in the brackets to zero, and solving with respect
to the two unknowns, $\gamma$ and $c$. It is easy to verify that
$\gamma$ is always positive, as a consequence of the condition number
$L/\mu$ always being at least 1.\end{proof}
\begin{cor}
\textbf{(Smooth case)} Chaining Theorem \ref{thm:main-thm} gives
a convergence rate for Point-SAGA at step $k$ under the constants
given in Theorem \ref{thm:main-thm} of:
\[
E\left\Vert x^{k}-x^{*}\right\Vert ^{2}\leq\left(1-\kappa\right)^{k}\frac{\mu+L}{\mu}\left\Vert x^{0}-x^{*}\right\Vert ^{2},
\]
if each $f_{i}:\mathbb{R}^{d}\rightarrow\mathbb{R}$ is $L$-smooth
and $\mu$-strongly convex.\end{cor}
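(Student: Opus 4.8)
The plan is to derive the stated bound by iterating the one-step contraction of Theorem \ref{thm:main-thm} and then relating the Lyapunov function $T^{k}$ to $\left\Vert x^{k}-x^{*}\right\Vert ^{2}$ at both ends of the chain. First I would take the total expectation in Theorem \ref{thm:main-thm}: since that result bounds the conditional expectation $E[T^{k+1}\mid x^{k},g_{1}^{k},\dots,g_{n}^{k}]\leq(1-\kappa)T^{k}$, the tower property gives $E[T^{k+1}]\leq(1-\kappa)E[T^{k}]$, and a trivial induction on $k$ yields $E[T^{k}]\leq(1-\kappa)^{k}T^{0}$, where $T^{0}$ is deterministic because $x^{0}$ and the initial table $g_{i}^{0}=f_{i}^{\prime}(x^{0})$ are fixed. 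Then I would lower-bound the left side: since $c=1/\mu L>0$ and the squared-norm terms are nonnegative, $T^{k}\geq\left\Vert x^{k}-x^{*}\right\Vert ^{2}$, hence $E\left\Vert x^{k}-x^{*}\right\Vert ^{2}\leq(1-\kappa)^{k}T^{0}$.

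It then remains to show $T^{0}\leq\frac{\mu+L}{\mu}\left\Vert x^{0}-x^{*}\right\Vert ^{2}$. Here I would use that in the smooth case each $f_{i}$ is differentiable, so the unique subgradient of $f_{i}$ at $x^{*}$ is $\nabla f_{i}(x^{*})$; the constraint $\sum_{i}g_{i}^{*}=0$ is automatically consistent with this choice because $x^{*}$ minimizes $f=\frac{1}{n}\sum_{i}f_{i}$, i.e. $\sum_{i}\nabla f_{i}(x^{*})=0$, so we may take $g_{i}^{*}=\nabla f_{i}(x^{*})$, and likewise $g_{i}^{0}=\nabla f_{i}(x^{0})$. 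By $L$-smoothness of each $f_{i}$, $\left\Vert g_{i}^{0}-g_{i}^{*}\right\Vert =\left\Vert \nabla f_{i}(x^{0})-\nabla f_{i}(x^{*})\right\Vert \leq L\left\Vert x^{0}-x^{*}\right\Vert $. Substituting into the definition of $T^{0}$ and using $c=1/\mu L$ gives $T^{0}\leq cL^{2}\left\Vert x^{0}-x^{*}\right\Vert ^{2}+\left\Vert x^{0}-x^{*}\right\Vert ^{2}=\left(\frac{L}{\mu}+1\right)\left\Vert x^{0}-x^{*}\right\Vert ^{2}$. Combining the three displays yields the claim.

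I do not expect a genuine obstacle: the only point requiring care is the identification $g_{i}^{*}=\nabla f_{i}(x^{*})$ in the smooth setting (and checking that it is compatible with $\sum_{i}g_{i}^{*}=0$), which is precisely what lets the crude $L$-smoothness estimate control the initial table error inside $T^{0}$. Everything else is the tower property, a routine induction, and nonnegativity of norms.
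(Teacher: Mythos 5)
Your proposal is correct and follows essentially the same route as the paper: chain the conditional contraction of Theorem \ref{thm:main-thm} via the tower property, and bound $T^{0}$ by $\frac{\mu+L}{\mu}\left\Vert x^{0}-x^{*}\right\Vert ^{2}$ using $c=1/\mu L$ together with $\left\Vert g_{i}^{0}-g_{i}^{*}\right\Vert \leq L\left\Vert x^{0}-x^{*}\right\Vert$ from $L$-smoothness. The only additions are points the paper leaves implicit (the lower bound $T^{k}\geq\left\Vert x^{k}-x^{*}\right\Vert ^{2}$ and the identification $g_{i}^{*}=\nabla f_{i}(x^{*})$ with $\sum_{i}g_{i}^{*}=0$), which you handle correctly.
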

\begin{thm}
\textbf{(Non-smooth case)} Suppose each $f_{i}:\mathbb{R}^{d}\rightarrow\mathbb{R}$
is $\mu$-strongly convex, $\left\Vert g_{i}^{0}-g_{i}^{*}\right\Vert \leq B$
and $\left\Vert x^{0}-x^{*}\right\Vert \leq R$. Then after $k$ iterations
of Point-SAGA with step size $\gamma=R/B\sqrt{n}$: 
\[
E\left\Vert \bar{x}^{k}-x^{*}\right\Vert ^{2}\leq2\frac{\sqrt{n}\left(1+\mu\left(R/B\sqrt{n}\right)\right)}{\mu k}RB,
\]
where $\bar{x}^{k}=\frac{1}{k}E\sum_{t=1}^{k}x^{t}.$ The proof of
this theorem is included in the supplementary material.
\end{thm}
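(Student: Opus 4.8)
The plan is to follow the proof of Theorem~\ref{thm:main-thm} step for step, but — since $L$-smoothness is no longer available — to give up on a geometric contraction and instead engineer a \emph{telescoping} Lyapunov inequality, from which a $1/k$ rate for the averaged iterate drops out by convexity.

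First I would rerun the one-step computation of Theorem~\ref{thm:main-thm} up to the point where smoothness enters. Everything used there except Theorem~\ref{thm:g-bound} is valid for merely convex $f_i$: the $(1+\mu\gamma)$-cocoerciveness of $\text{prox}_j^\gamma$ (Proposition~\ref{thm:firm-nonexpansiveness}), the identity $E[z_j^k-v_j]=x^k-x^*$, and Lemma~\ref{lem:inner-lemma}. In place of Theorem~\ref{thm:g-bound} I would use its $L\to\infty$ version, $\langle g_j^{k+1}-g_j^*,\,z_j^k-v_j\rangle\ge\gamma\|g_j^{k+1}-g_j^*\|^2$, which holds for any convex $f_j$ because $g_j^{k+1}=g_{\gamma f_j}(z_j^k)$, $g_j^*=g_{\gamma f_j}(v_j)$, and $g_{\gamma f}=p_{\frac1\gamma f^*}(\cdot/\gamma)$ is a rescaling of the (firmly nonexpansive) proximal operator of the convex conjugate. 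Assembling the pieces exactly as in the smooth proof then gives
\[
(1+\mu\gamma)\,E\|x^{k+1}-x^*\|^2\;\le\;\|x^k-x^*\|^2+\frac{\gamma^2}{n}\sum_{i=1}^n\|g_i^k-g_i^*\|^2-\gamma^2\,E\|g_j^{k+1}-g_j^*\|^2 .
\]

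Next I would set $\phi^k=\|x^k-x^*\|^2+\gamma^2\sum_{i=1}^n\|g_i^k-g_i^*\|^2$ and substitute the table-update identity $E\sum_i\|g_i^{k+1}-g_i^*\|^2=(1-\tfrac1n)\sum_i\|g_i^k-g_i^*\|^2+E\|g_j^{k+1}-g_j^*\|^2$ into the display above. The coefficient $\gamma^2$ is precisely the one that makes the $\|g_j^{k+1}-g_j^*\|^2$ contributions cancel, leaving
\[
E\big[\phi^{k+1}\big]+\mu\gamma\,E\|x^{k+1}-x^*\|^2\;\le\;\phi^k .
\]
Taking total expectations, telescoping over $t=0,\dots,k-1$, and discarding $E[\phi^k]\ge0$ yields $\mu\gamma\sum_{t=1}^{k}E\|x^t-x^*\|^2\le\phi^0\le R^2+\gamma^2 nB^2$, using $\|x^0-x^*\|\le R$ and $\|g_i^0-g_i^*\|\le B$ for each of the $n$ terms. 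Finally, convexity of $\|\cdot\|^2$ — applied to the average over $t$ and then to push each inner expectation through, since $\bar x^k=\tfrac1k\sum_{t=1}^k E[x^t]$ — gives $\|\bar x^k-x^*\|^2\le\tfrac1k\sum_{t=1}^k E\|x^t-x^*\|^2\le\frac{R^2+\gamma^2 nB^2}{\mu\gamma k}$. With $\gamma=R/(B\sqrt n)$ the numerator is at most $2R^2$, and since $1+\mu\gamma>1$ this is at most $2\sqrt n\,(1+\mu\gamma)RB/(\mu k)$, as claimed.

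The main obstacle is convincing oneself that the one-step inequality still works once the $1/L$ slack of Theorem~\ref{thm:g-bound} vanishes. In the smooth proof that slack, together with the step-size equations, is what forces both bracketed coefficients to zero and produces a contraction on all of $T^k$; here there is no contraction on the $\sum_i\|g_i^k-g_i^*\|^2$ part, and none is needed, since a telescoping inequality already gives a $1/k$ averaged rate. The only genuine constraint is that the Lyapunov weight on those terms must be the unique value ($\gamma^2$) forced by the cancellation — everything after that (the telescope, the crude bound $\|g_i^0-g_i^*\|\le B$, Jensen) is routine. It is worth noting explicitly that the fixed step size $R/(B\sqrt n)$ is admissible for any instance, since Proposition~\ref{thm:firm-nonexpansiveness} and the firm-nonexpansiveness bound hold for every $\gamma>0$; this is exactly why, unlike other primal non-smooth methods, no decreasing step-size schedule is required.
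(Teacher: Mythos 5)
Your proposal is correct and takes essentially the same route as the paper's own proof: rerun the one-step Lyapunov analysis with the smoothness slack dropped ($L=\infty$), pick the weight on the gradient-table terms so that the $\left\Vert g_{j}^{k+1}-g_{j}^{*}\right\Vert ^{2}$ contributions cancel, telescope, apply Jensen, and balance the two resulting terms with $\gamma=R/(B\sqrt{n})$. The only deviations are cosmetic: you use the weight $\gamma^{2}n$ where the paper uses $c=\alpha\gamma^{2}n$, which leaves you with the slightly tighter denominator $\mu\gamma$ instead of $\kappa=\mu\gamma/(1+\mu\gamma)$ (you correctly recover the stated bound via $1\leq1+\mu\gamma$), and you justify the $L\rightarrow\infty$ form of the cocoercivity bound explicitly, which the paper merely asserts.
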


\section{Implementation }

\label{sec:implementation}Care must be taken for efficient implementation,
particularly in the sparse gradient case. We discuss the key points
below. A fast Cython implementation is available on the author's website
incorporating these techniques.\vspace{-3mm}

\begin{description}
\item [{Proximal\ operators}] For the most common binary classification
and regression methods, implementing the proximal operator is straight-forward.
We include details of the computation of the proximal operators for
the hinge, square and logistic losses in the supplementary material.
The logistic loss does not have a closed form proximal operator, however
it may be computed very efficiently in practice using Newton's method
on a 1D subproblem. For problems of a non-trivial dimensionality the
cost of the dot products in the main step is much greater than the
cost of the proximal operator evaluation. We also detail how to handle
a quadratic regularizer within each term's prox operator, which has
a closed form in terms of the unregularized prox operator.\vspace{-1mm}

\item [{Initialization}] Instead of setting $g_{i}^{0}=f_{i}^{\prime}(x^{0})$
before commencing the algorithm, we recommend using $g_{i}^{0}=0$
instead. This avoids the cost of a initial pass over the data. In
practical effect this is similar to the SDCA initialization of each
dual variable to 0.
\end{description}

\section{Experiments\vspace{-1mm}
}

We tested our algorithm which we call Point-SAGA against SAGA \citep{adefazio-nips2014},
SDCA \citep{SDCA}, Pegasos/SGD \citep{pegasos} and the catalyst
acceleration scheme \citep{mairal-catalyst}. SDCA was chosen as the
inner algorithm for the catalyst scheme as it doesn't require a step-size,
making it the most practical of the variants. Catalyst applied to
SDCA is essentially the same algorithm as proposed in \citet{sdca-accel}.
A single inner epoch was used for each SDCA invocation. Accelerated
MISO as well as the primal-dual FIG method \citep{lan-accel} were
excluded as we wanted to test on sparse problems and they are not
designed to take advantage of sparsity. The step-size parameter for
each method ($\kappa$ for catalyst-SDCA) was chosen using a grid
search of powers of $2$. The step size that gives the lowest error
at the final epoch is used for each method.

We selected a set of commonly used datasets from the LIBSVM repository
\citep{libsvm}. The pre-scaled versions were used when available.
Logistic regression with $L_{2}$ regularization was applied to each
problem. The $L_{2}$ regularization constant for each problem was
set by hand to ensure $f$ was not in the big data regime $n\geq L/\mu$;
as noted above, all the methods perform essentially the same when
$n\geq L/\mu$. The constant used is noted beneath each plot. Open
source code to exactly replicate the experimental results is available
at \href{https://github.com/adefazio/point-saga}{https://github.com/adefazio/point-saga}.
\vspace{-2mm}

\paragraph*{Algorithm scaling with respect to $n$}

The key property that distinguishes accelerated FIG methods from their
non-accelerated counterparts is their performance scaling with respect
to the dataset size. For large datasets on well-conditioned problems
we expect from the theory to see little difference between the methods.
To this end, we ran experiments including versions of the datasets
subsampled randomly without replacement in 10\% and 5\% increments,
in order to show the scaling with $n$ empirically. The same amount
of regularization was used for each subset.

Figure \ref{fig:results} shows the function value sub-optimality
for each dataset-subset combination. We see that in general accelerated
methods dominate the performance of their non-accelerated counter-parts.
Both SDCA and SAGA are much slower on some datasets comparatively
than others. For example, SDCA is very slow on the 5 and 10\% COVTYPE
datasets, whereas both SAGA and SDCA are much slower than the accelerated
methods on the AUSTRALIAN dataset. These differences reflect known
properties of the two methods. SAGA is able to adapt to inherent strong
convexity while SDCA can be faster on very well-conditioned problems.

There is no clear winner between the two accelerated methods, each
gives excellent results on each problem. The Pegasos (stochastic gradient
descent) algorithm with its slower than linear rate is a clear loser
on each problem, almost appearing as an almost horizontal line on
the log scale of these plots.

\begin{figure*}
\subfloat[COVTYPE $\mu=2\times10^{-6}$ : 5\%, 10\%, 100\% subsets]{\includegraphics[width=4.8cm]{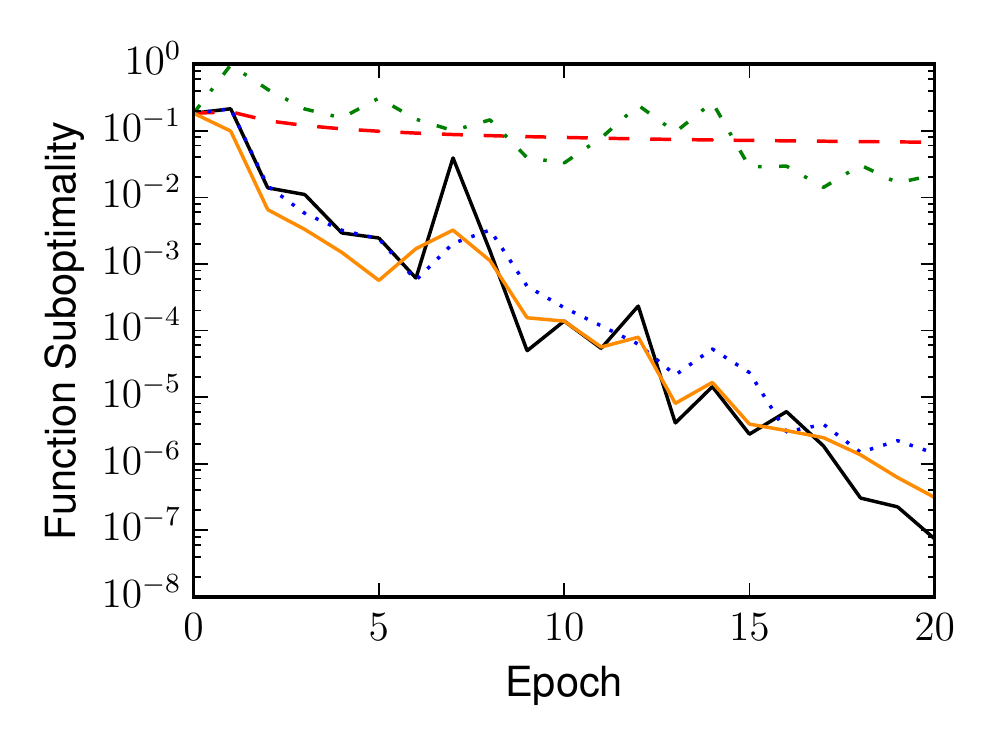}\includegraphics[width=4.8cm]{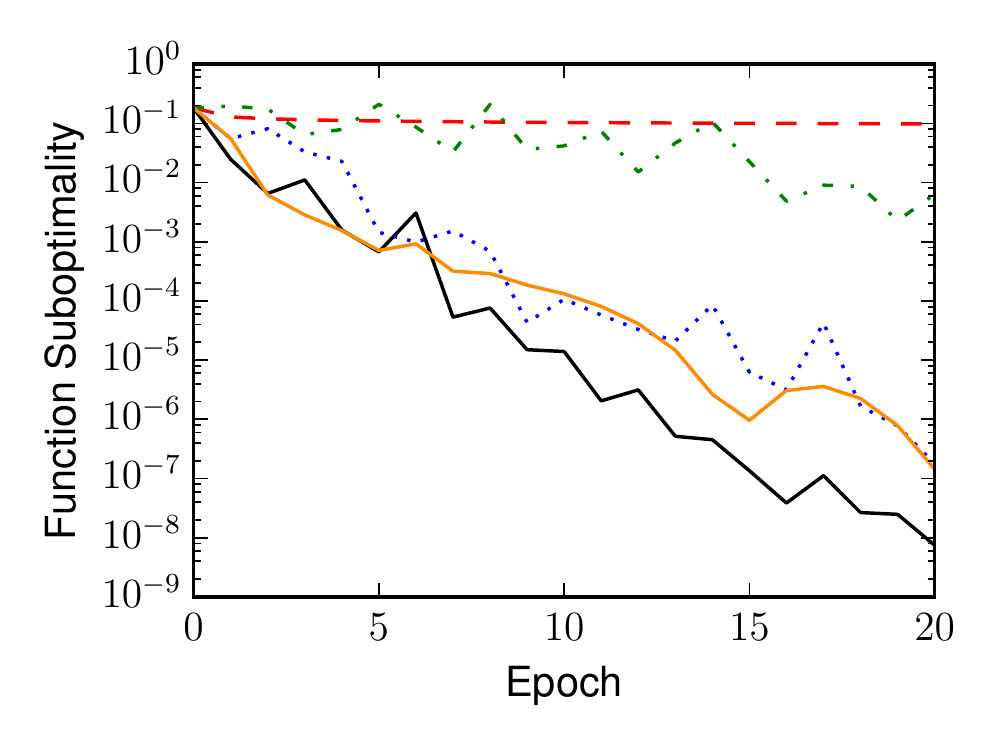}\includegraphics[width=4.8cm]{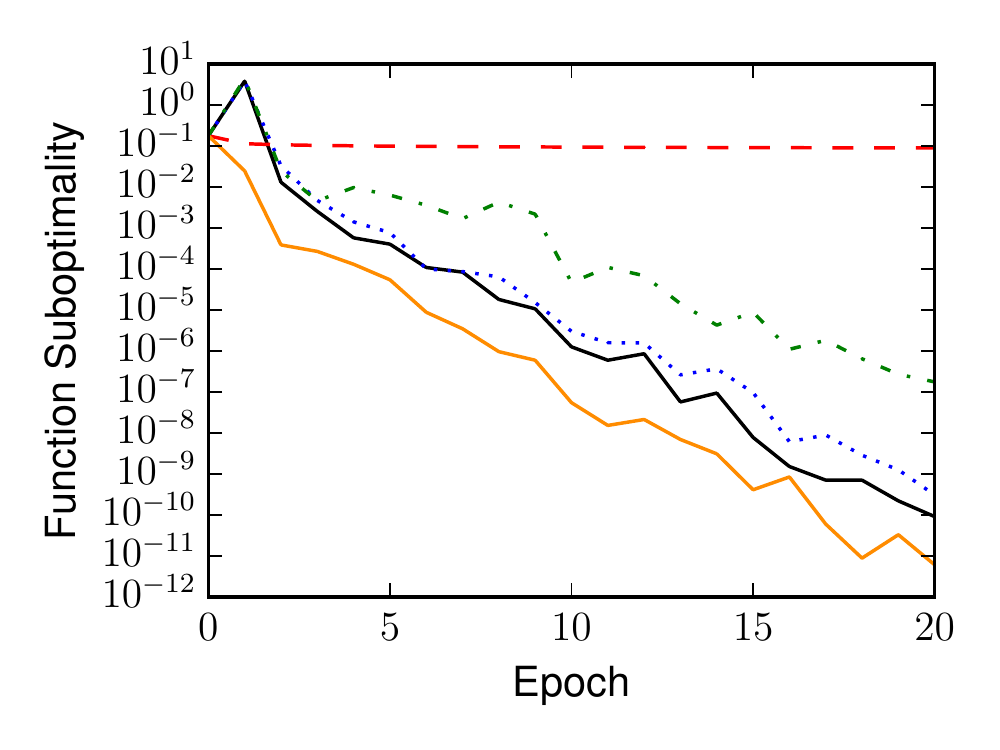}

}

\subfloat[AUSTRALIAN $\mu=10^{-4}$: 5\%, 10\%, 100\% subsets]{\includegraphics[width=4.8cm]{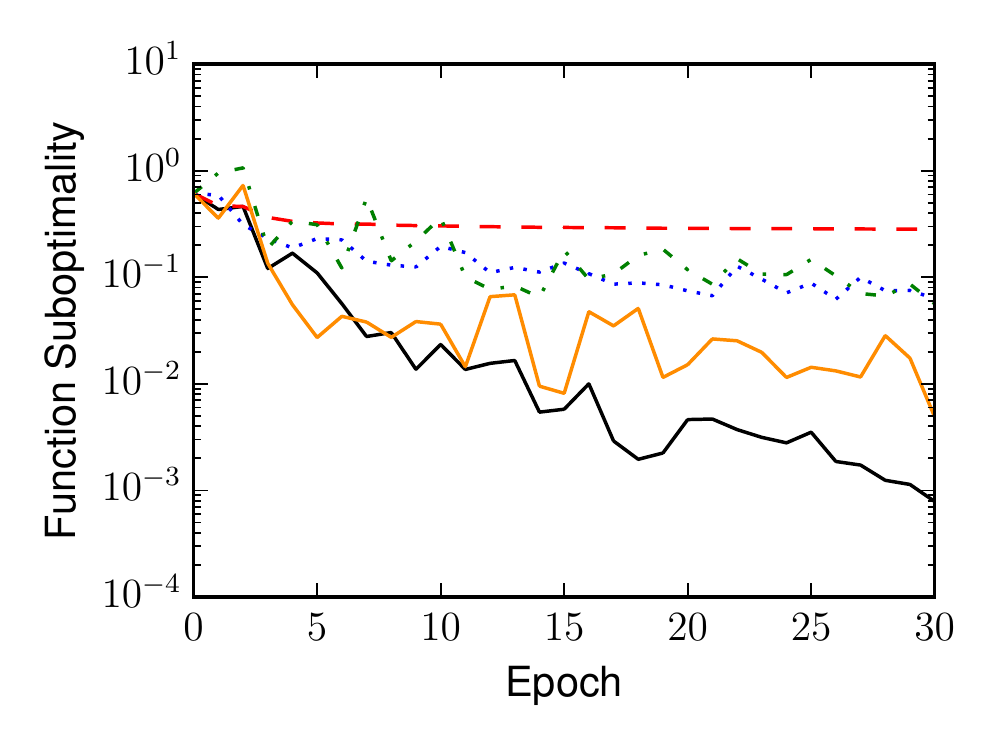}\includegraphics[width=4.8cm]{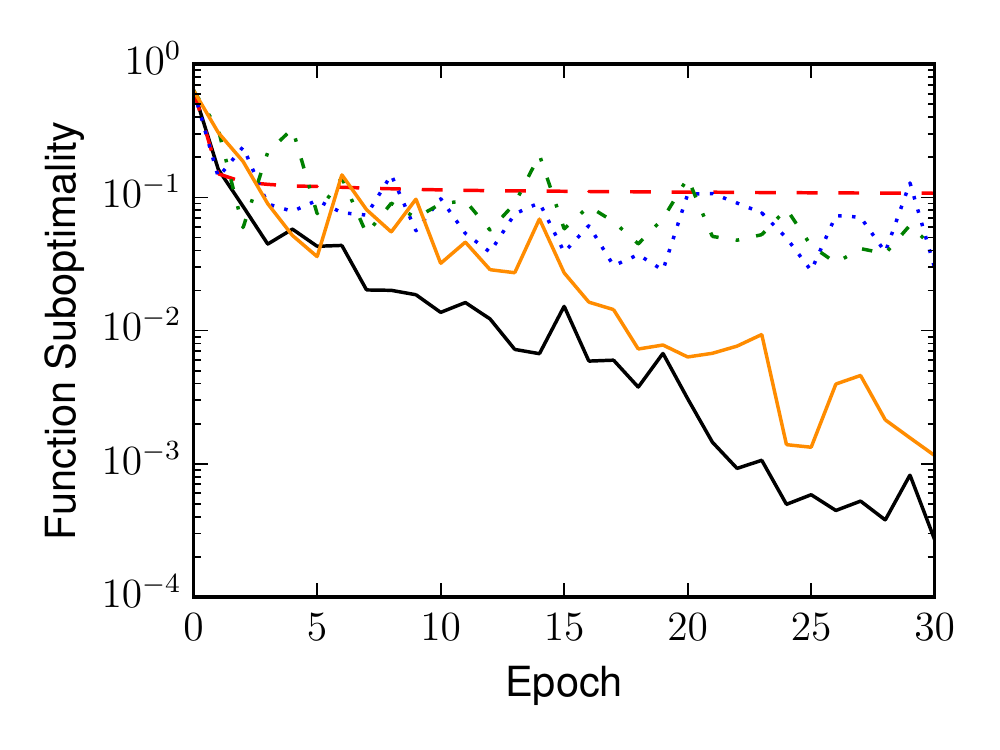}\includegraphics[width=4.8cm]{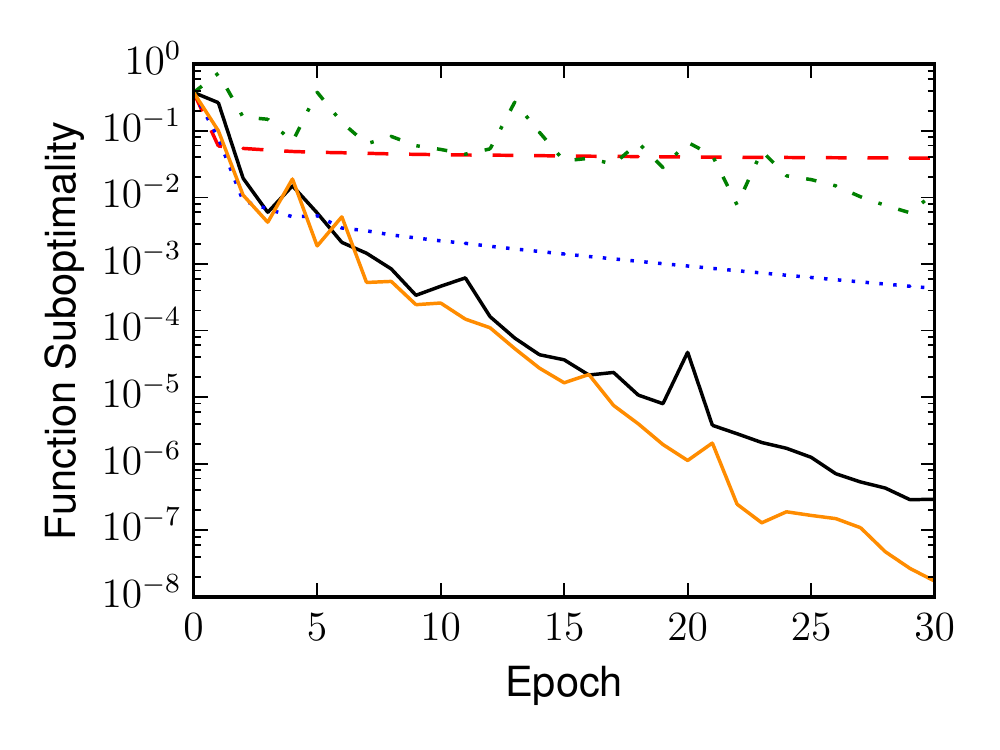}

}

\subfloat[MUSHROOMS $\mu=10^{-4}$: 5\%, 10\%, 100\% subsets]{\includegraphics[width=4.8cm]{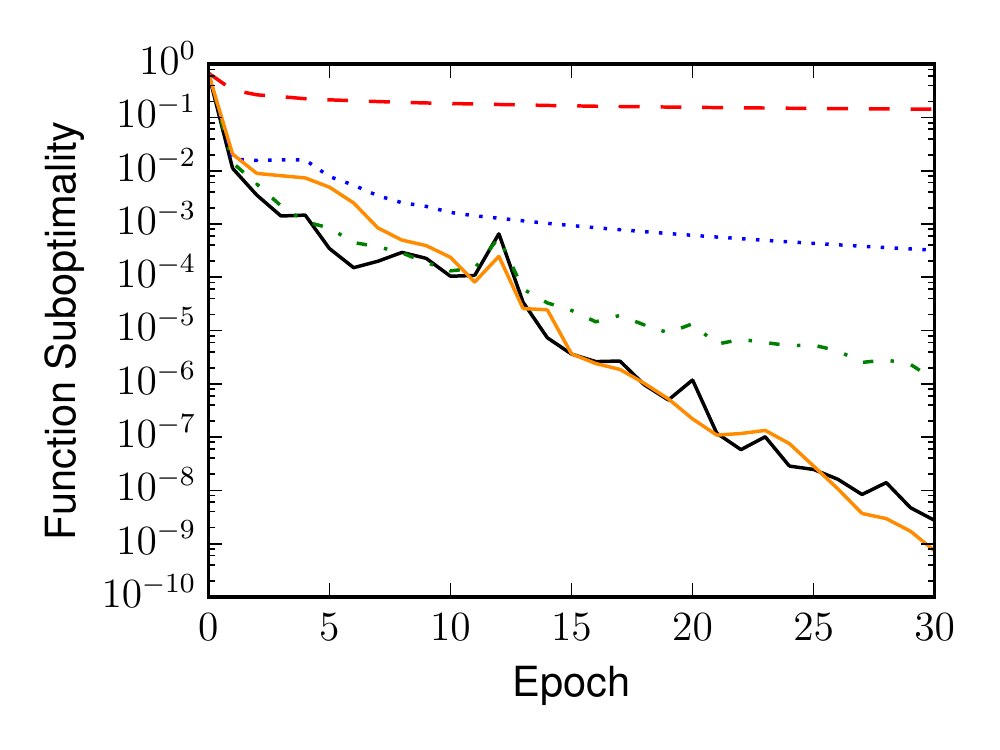}\includegraphics[width=4.8cm]{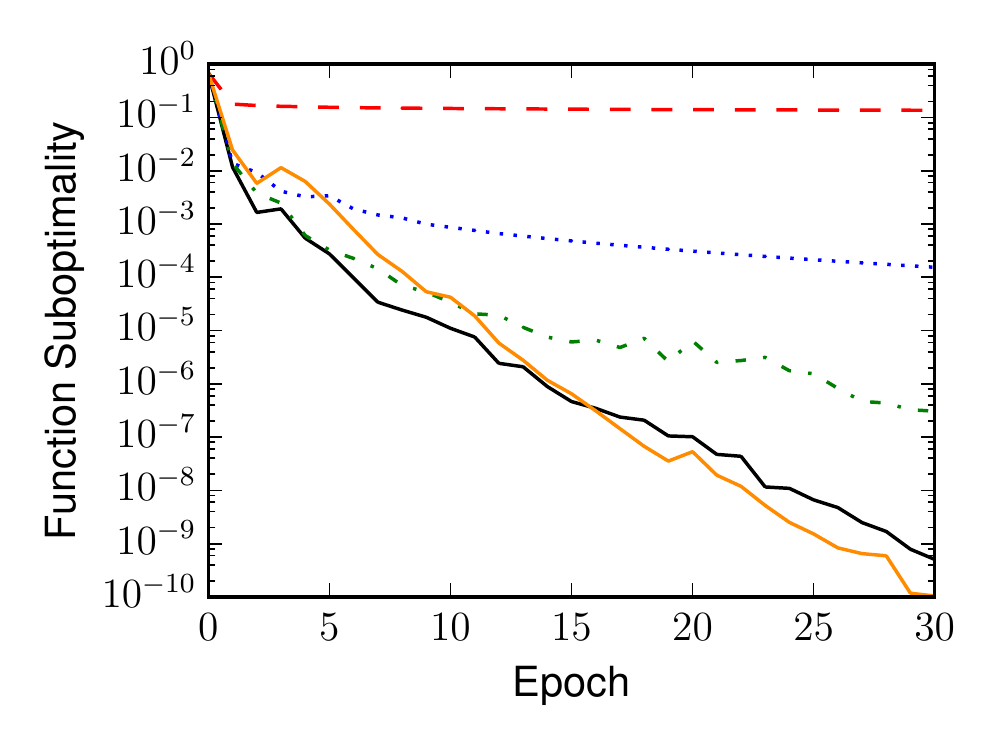}\includegraphics[width=4.8cm]{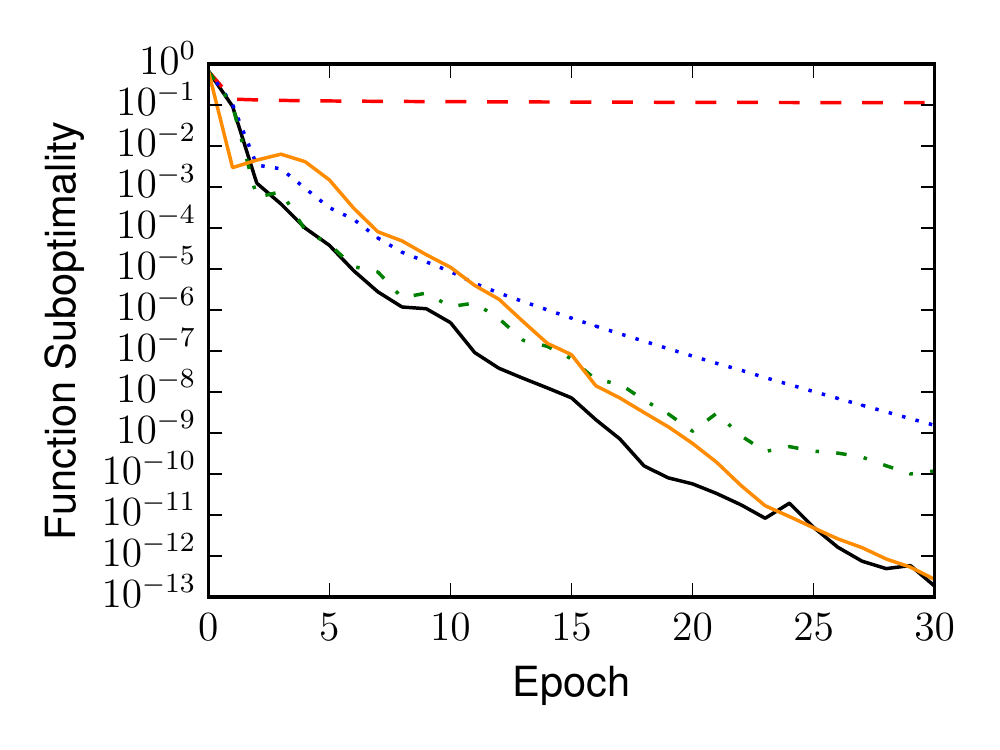}

}

\subfloat[RCV1 with hinge loss, $\mu=5\times10^{-5}$: 5\%, 10\%, 100\% subsets]{\includegraphics[width=4.8cm]{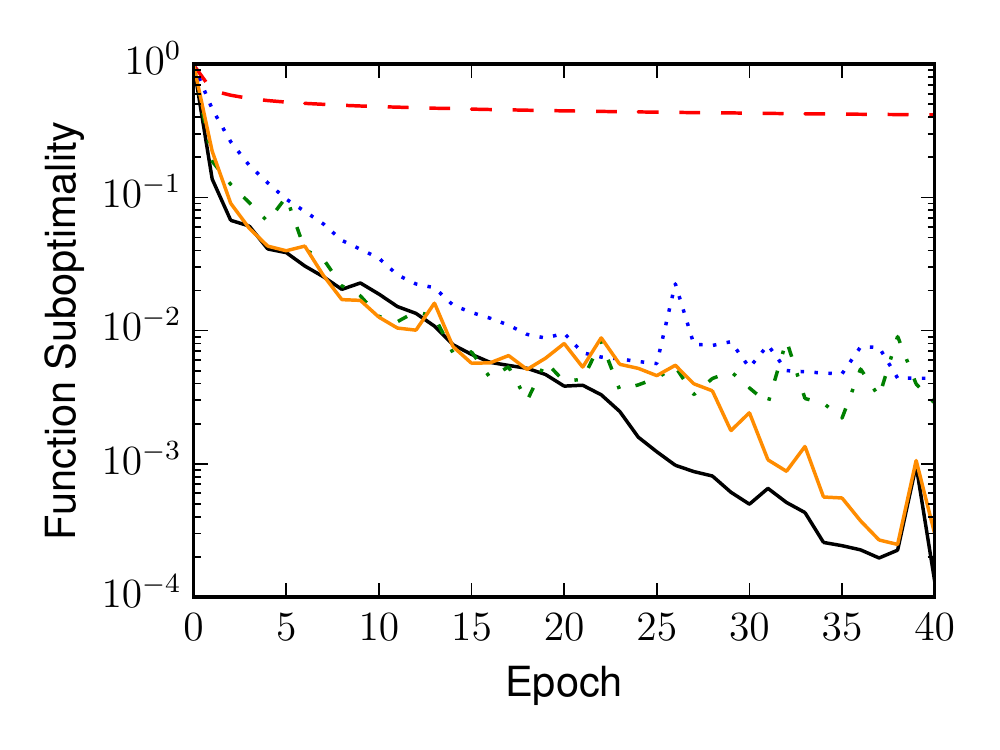}\includegraphics[width=4.8cm]{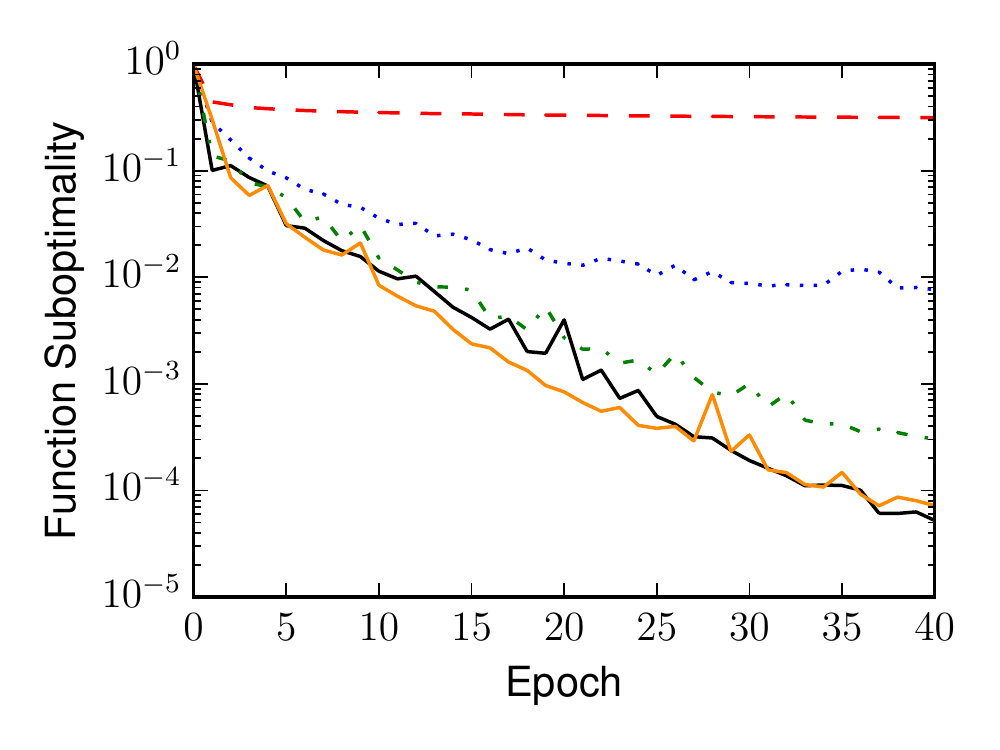}\includegraphics[width=4.8cm]{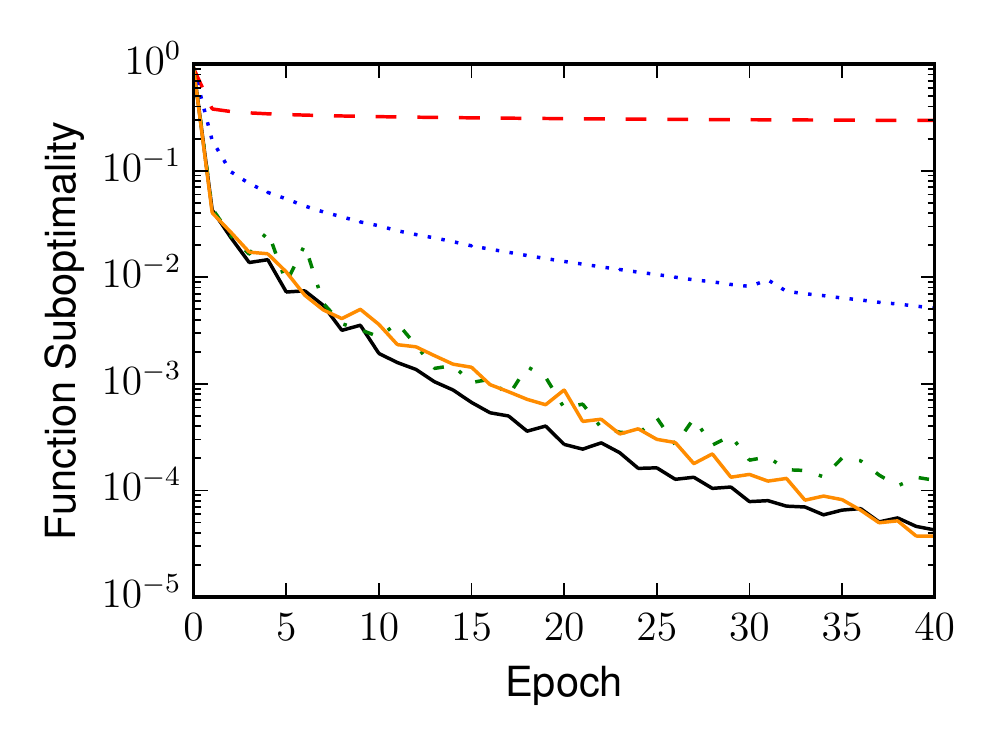}

}

\noindent \begin{centering}
\includegraphics{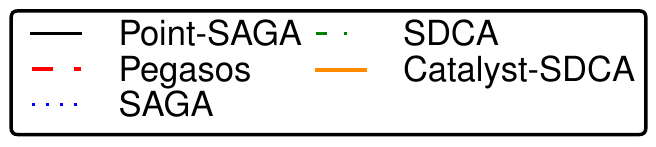}
\par\end{centering}

\caption{\label{fig:results}Experimental results}
\end{figure*}

\paragraph*{Non-smooth problems}

We also tested the RCV1 dataset on the hinge loss. In general we did
not expect an accelerated rate for this problem, and indeed we observe
that Point-SAGA is roughly as fast as SDCA across the different dataset
sizes. 

\newpage{}

\bibliographystyle{plainnat}
\bibliography{nips_2016}

\newpage

\appendix

\section{Proximal operators}

For the most common binary classification and regression methods,
implementing the proximal operator is straight-forward. In this section
let $y_{j}$ be the label or target for regression, and $X_{j}$ the
data instance vector. We assume for binary classification that $y_{j}\in\{-1,1\}$.

\paragraph*{Hinge loss:}

\[
f_{j}(z)=l(z;y_{j},X_{j})=\max\left\{ 0,\,1-y_{j}\left\langle z,X_{j}\right\rangle \right\} .
\]
The proximal operator has a closed form expression:
\[
\text{prox}_{\gamma f_{j}}(z)=z-\gamma y_{j}\nu X_{j},
\]
where:
\[
s=\frac{1-y_{j}\left\langle z,X_{j}\right\rangle }{\gamma\left\Vert X_{j}\right\Vert ^{2}}.
\]
\[
\nu=\begin{cases}
-1 & s\geq1\\
0 & s\leq0\\
-s & \text{otherwise}
\end{cases}.
\]

\paragraph*{Logistic loss:}

\[
f_{j}(z)=l(z;y_{j},X_{j})=\log\left(1+\exp\left(-y_{j}X_{j}^{T}z\right)\right).
\]
There is no closed form expression, however it can be computed very
efficiently using Newton iteration, since it can be reduced to a 1D
minimization problem. In particular, let $c_{0}=0$, $\gamma^{\prime}=\gamma\left\Vert X_{j}\right\Vert ^{2}$,
and $a=\left\langle z,X_{j}\right\rangle $. Then iterate until convergence:
\[
s^{k}=\frac{-y_{j}}{1+\exp\left(y_{j}c^{k}\right)},
\]
\[
c^{k+1}=c^{k}-\frac{\gamma^{\prime}s^{k}+c^{k}-a}{1-y^{\prime}s^{k}-\gamma^{\prime}s^{k}s^{k}}.
\]
The prox operator is then $\text{prox}_{\gamma f_{j}}(z)=z-\left(a-c^{k}\right)X_{j}/\left\Vert X_{j}\right\Vert ^{2}$.
Three iterations are generally enough, but ill-conditioned problems
or large step sizes may require up to 12. Correct initialization is
important, as it will diverge when initialized with a point on the
opposite side of 0 from the solution.

\paragraph*{Squared loss:}

\[
f_{j}(z)=l(z;y_{j},X_{j})=\frac{1}{2}\left(X_{j}^{T}z-y_{j}\right)^{2}.
\]
Let $\gamma^{\prime}=\gamma\left\Vert X_{j}\right\Vert ^{2}$ and
$a=\left\langle z,X_{j}\right\rangle $. Define:
\[
c=\frac{a+\gamma^{\prime}y}{1+\gamma^{\prime}}.
\]
Then $\text{prox}_{\gamma f_{j}}(z)=z-\left(a-c\right)X_{j}/\left\Vert X_{j}\right\Vert ^{2}.$

\subsection*{L2 regularization}

Including a regularizer within each $f_{i}$, i.e. $F_{i}(x)=f_{i}(x)+\frac{\mu}{2}\left\Vert x\right\Vert ^{2},$
can be done using the proximal operator of $f_{i}$. Define the scaling
factor:
\[
\rho=1-\frac{\mu\gamma}{1+\mu\gamma}.
\]
Then $\text{prox}_{\gamma F_{i}}(z)=\text{prox}_{\rho\gamma f_{i}}(\rho z)$.

\section{Proofs}
\begin{lem}
\label{lem:inner-lemma}Under Algorithm 1, taking the expectation
over the random choice of $j$, conditioning on $x^{k}$ and each
$g_{i}^{k}$, allows us to bound the following inner product at step
$k$:
\begin{alignat*}{1}
 & E\left\langle \gamma\left[g_{j}^{k}-\frac{1}{n}\sum_{i=1}^{n}g_{i}^{k}\right]-\gamma g_{j}^{*},\left(x^{k}-x^{*}\right)+\gamma\left[g_{j}^{k}-\frac{1}{n}\sum_{i=1}^{n}g_{i}^{k}\right]-\gamma g_{j}^{*}\right\rangle \\
 & \leq\gamma^{2}\frac{1}{n}\sum_{i=1}^{n}\left\Vert g_{i}^{k}-g_{i}^{*}\right\Vert ^{2}.
\end{alignat*}
\end{lem}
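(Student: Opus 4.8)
The plan is to rewrite the left argument of the inner product as a centered, mean-zero increment, split the product into a cross term plus a square, kill the cross term in expectation, and bound the square by the elementary variance inequality. First I would introduce the shorthand $a_i = g_i^k - g_i^*$ and $\bar a = \frac{1}{n}\sum_{i=1}^{n} a_i$. Since the fixed reference subgradients are chosen so that $\sum_{i=1}^{n} g_i^* = 0$, we have $\frac{1}{n}\sum_{i=1}^{n} g_i^k = \frac{1}{n}\sum_{i=1}^{n} a_i = \bar a$. Hence the left argument equals $\gamma\bigl(g_j^k - \tfrac{1}{n}\sum_{i=1}^{n} g_i^k - g_j^*\bigr) = \gamma\,(a_j - \bar a)$, and the right argument equals $(x^k - x^*) + \gamma\,(a_j - \bar a)$.

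Next I would expand the bilinear expression as $\gamma\,\langle a_j - \bar a,\; x^k - x^*\rangle + \gamma^2\,\|a_j - \bar a\|^2$ and take the expectation over the uniform choice of $j$, conditioning on $x^k$ and on the whole table $\{g_i^k\}$. Under this conditioning $x^k - x^*$ is deterministic and $E[a_j - \bar a] = \bar a - \bar a = 0$, so the cross term vanishes and we are left with $\gamma^2\, E\|a_j - \bar a\|^2$.

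For the remaining term I would use that $\bar a = E[a_j]$ together with the identity $E\|a_j - E[a_j]\|^2 = E\|a_j\|^2 - \|E[a_j]\|^2 \le E\|a_j\|^2$ (variance is at most the second moment). Since $E\|a_j\|^2 = \frac{1}{n}\sum_{i=1}^{n}\|a_i\|^2 = \frac{1}{n}\sum_{i=1}^{n}\|g_i^k - g_i^*\|^2$, this yields exactly the stated bound $\gamma^2\,\frac{1}{n}\sum_{i=1}^{n}\|g_i^k - g_i^*\|^2$.

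There is essentially no obstacle here: the only points requiring care are the bookkeeping observation that $\frac{1}{n}\sum_i g_i^k$ coincides with $\frac{1}{n}\sum_i (g_i^k - g_i^*)$ by virtue of the centering $\sum_i g_i^* = 0$, and the observation that the cross term drops out because the centered increment $a_j - \bar a$ has mean zero while $x^k - x^*$ is fixed under the conditioning. Everything else is the standard variance-versus-second-moment inequality.
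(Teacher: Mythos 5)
Your proposal is correct and follows essentially the same route as the paper's own proof: split off the cross term with $x^k-x^*$, which vanishes in expectation because the centered increment has mean zero (using $\sum_i g_i^*=0$) while $x^k-x^*$ is fixed under the conditioning, and then bound the remaining squared term via the variance-versus-second-moment inequality $E\left\Vert a_j-\bar a\right\Vert^2\leq E\left\Vert a_j\right\Vert^2$. The only difference is cosmetic notation ($a_i=g_i^k-g_i^*$), so no further comparison is needed.
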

\begin{proof}
We start by splitting on the right hand side of the inner product:
\begin{gather}
=E\left\langle \gamma\left[g_{j}^{k}-\frac{1}{n}\sum_{i=1}^{n}g_{i}^{k}\right]-\gamma g_{j}^{*}\,,\,x^{k}-x^{*}\right\rangle \nonumber \\
+E\left\langle \gamma\left[g_{j}^{k}-\frac{1}{n}\sum_{i=1}^{n}g_{i}^{k}\right]-\gamma g_{j}^{*}\,,\,\gamma\left[g_{j}^{k}-\frac{1}{n}\sum_{i=1}^{n}g_{i}^{k}\right]-\gamma g_{j}^{*}\right\rangle \label{eq:m1}
\end{gather}

The first inner product has expectation $0$ on the left hand side
(Recall that $E[g_{j}^{*}]=0$), so it's simply 0 in expectation (we
may take expectation on the left since the right doesn't depend on
$j$). The second inner product is the same on both sides, so we may
convert it to a norm-squared term. So we have:
\begin{eqnarray*}
 & = & \gamma^{2}E\left\Vert g_{j}^{k}-\frac{1}{n}\sum_{i=1}^{n}g_{i}^{k}-g_{j}^{*}\right\Vert ^{2}\\
 & \leq & \gamma^{2}E\left\Vert g_{j}^{k}-g_{j}^{*}\right\Vert ^{2}=\gamma^{2}\frac{1}{n}\sum_{i=1}^{n}\left\Vert g_{i}^{k}-g_{i}^{*}\right\Vert ^{2}.
\end{eqnarray*}

The inequality used is just an application of the variance formula
$E[\left(X-E[X]\right)^{2}]=E[X^{2}]-E[X]^{2}\leq E[X^{2}].$\end{proof}
\begin{cor}
Chaining the main theorem gives a convergence rate for point-saga
at step $k$ under the constants given in of:
\[
E\left\Vert x^{k}-x^{*}\right\Vert ^{2}\leq\left(1-\kappa\right)^{k}\frac{\mu+L}{\mu}\left\Vert x^{0}-x^{*}\right\Vert ^{2},
\]
if each $f_{i}:\mathbb{R}^{d}\rightarrow\mathbb{R}$ is $L$-smooth
and $\mu$-strongly convex.

\end{cor}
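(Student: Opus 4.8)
The plan is to chain the single-step contraction from Theorem~\ref{thm:main-thm} and then relate the Lyapunov function $T^k$ to $\left\Vert x^k-x^*\right\Vert^2$ at both the current step and the initial step. First I would pass from the conditional bound $E\left[T^{k+1}\right]\leq(1-\kappa)T^{k}$ to an unconditional one: the expectation in Theorem~\ref{thm:main-thm} is taken over the choice of $j$ at step $k+1$ conditioned on $x^k$ and the $g_i^k$, so applying the tower property over the earlier random choices gives $E\left[T^{k+1}\right]\leq(1-\kappa)E\left[T^{k}\right]$. Iterating this from step $0$, and using that $T^0$ is deterministic, yields $E\left[T^{k}\right]\leq(1-\kappa)^{k}T^{0}$.

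Next I would lower-bound the left-hand side. Since $c=1/(\mu L)>0$ and the squared norms are nonnegative, the first term of the Lyapunov function is nonnegative, so $T^{k}\geq\left\Vert x^{k}-x^{*}\right\Vert^{2}$, giving
\[
E\left\Vert x^{k}-x^{*}\right\Vert^{2}\leq E\left[T^{k}\right]\leq(1-\kappa)^{k}T^{0}.
\]

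It then remains to show $T^{0}\leq\frac{\mu+L}{\mu}\left\Vert x^{0}-x^{*}\right\Vert^{2}$. Here I would use that in the smooth case each $\nabla f_{i}$ is single-valued, so the choice $g_{i}^{*}=\nabla f_{i}(x^{*})$ is forced (and the constraint $\sum_{i}g_{i}^{*}=0$ holds automatically because $x^{*}$ minimizes $f$), while the initialization sets $g_{i}^{0}=\nabla f_{i}(x^{0})$. Then $L$-smoothness of each $f_{i}$ gives $\left\Vert g_{i}^{0}-g_{i}^{*}\right\Vert=\left\Vert\nabla f_{i}(x^{0})-\nabla f_{i}(x^{*})\right\Vert\leq L\left\Vert x^{0}-x^{*}\right\Vert$, so
\[
\frac{c}{n}\sum_{i=1}^{n}\left\Vert g_{i}^{0}-g_{i}^{*}\right\Vert^{2}\leq cL^{2}\left\Vert x^{0}-x^{*}\right\Vert^{2}=\frac{L}{\mu}\left\Vert x^{0}-x^{*}\right\Vert^{2},
\]
using $c=1/(\mu L)$. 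Adding $\left\Vert x^{0}-x^{*}\right\Vert^{2}$ gives $T^{0}\leq\frac{\mu+L}{\mu}\left\Vert x^{0}-x^{*}\right\Vert^{2}$, and combining with the displayed chain completes the proof.

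There is no substantial obstacle; the only step requiring a moment's care is the identification $g_{i}^{*}=\nabla f_{i}(x^{*})$ in the smooth regime together with the use of $L$-Lipschitz gradients to control the initial gradient-table error — note that if one instead used the recommended $g_{i}^{0}=0$ initialization, the constant $\frac{\mu+L}{\mu}$ would have to be replaced by one depending on $\frac{1}{n}\sum_{i}\left\Vert g_{i}^{*}\right\Vert^{2}$.
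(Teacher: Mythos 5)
Your proposal is correct and follows essentially the same route as the paper: chain the conditional contraction via the tower property to get $E[T^{k}]\leq(1-\kappa)^{k}T^{0}$, drop the nonnegative gradient-table term to lower-bound $T^{k}$ by $\left\Vert x^{k}-x^{*}\right\Vert^{2}$, and bound $T^{0}$ by $\frac{\mu+L}{\mu}\left\Vert x^{0}-x^{*}\right\Vert^{2}$ using the initialization $g_{i}^{0}=f_{i}^{\prime}(x^{0})$, $L$-smoothness, and $c=1/\mu L$. Your added remarks (the forced choice $g_{i}^{*}=\nabla f_{i}(x^{*})$ and the caveat about the $g_{i}^{0}=0$ initialization) are accurate but not needed beyond what the paper does.
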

\begin{proof}
First we simplify $T^{0}$ using $c=1/\mu L$ and use Lipschitz smoothness:
\begin{eqnarray*}
T^{0} & = & \frac{1}{\mu L}\cdot\frac{1}{n}\sum_{i}\left\Vert g_{i}^{0}-g_{i}^{*}\right\Vert ^{2}+\left\Vert x^{0}-x^{*}\right\Vert ^{2}\\
 & \leq & \frac{L}{\mu}\cdot\left\Vert x^{0}-x^{*}\right\Vert ^{2}+\left\Vert x^{0}-x^{*}\right\Vert ^{2}\\
 & = & \frac{\mu+L}{\mu}\left\Vert x^{0}-x^{*}\right\Vert ^{2}.
\end{eqnarray*}
Now recall that the main theorem gives a bound $E\left[T^{k+1}\right]\leq\left(1-\kappa\right)T^{k}$
where the expectation is conditional on $x^{k}$ and each $g_{i}^{k}$
from step $k$, taking expectation over the randomness in the choice
of $j$. We can further take expectation with respect to $x^{k}$
and each $g_{i}^{k}$, giving the unconditional bound:
\[
E\left[T^{k+1}\right]\leq\left(1-\kappa\right)E\left[T^{k}\right].
\]

Chaining over $k$ gives the result.\end{proof}
\begin{thm}
Suppose each $f_{i}:\mathbb{R}^{d}\rightarrow\mathbb{R}$ is $\mu$-strongly
convex, $\left\Vert g_{i}^{0}-g_{i}^{*}\right\Vert \leq B$ and $\left\Vert x^{0}-x^{*}\right\Vert \leq R$.
Then after $k$ iterations of Point-SAGA with step size $\gamma=R/B\sqrt{n}$:
\[
E\left\Vert \bar{x}^{k}-x^{*}\right\Vert ^{2}\leq2\frac{\sqrt{n}\left(1+\mu\left(R/B\sqrt{n}\right)\right)}{\mu k}RB,
\]
where $\bar{x}^{k}=\frac{1}{k}E\sum_{t=1}^{k}x^{t}.$\end{thm}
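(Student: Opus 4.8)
The plan is to re-run the single-step analysis behind Theorem~\ref{thm:main-thm} almost verbatim, but with the smoothness of the $f_i$ switched off, and then to extract a $1/k$ rate by summation rather than by chaining. Concretely, every ingredient of that proof except Theorem~\ref{thm:g-bound} used only convexity and strong convexity: the $(1+\mu\gamma)$-cocoercivity of $\mathrm{prox}_j^{\gamma}$ (Proposition~\ref{thm:firm-nonexpansiveness}) and Lemma~\ref{lem:inner-lemma} go through unchanged. Theorem~\ref{thm:g-bound} is the only place $L$ entered, and taking $L\to\infty$ there leaves exactly the plain firm non-expansiveness $\langle g_{\gamma f}(x)-g_{\gamma f}(y),x-y\rangle\ge\gamma\|g_{\gamma f}(x)-g_{\gamma f}(y)\|^2$, which holds because $g_{\gamma f}$ is the proximal operator of the convex conjugate $f^{*}$ (Moreau decomposition) and needs no smoothness. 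So I would reproduce the chain of (in)equalities for $(1+\mu\gamma)E\|x^{k+1}-x^{*}\|^2$ up through the bound on the second inner product of Equation~\ref{eq:m2}, merely replacing the factor $\bigl(1+\tfrac{1}{L\gamma}\bigr)$ by $1$.

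Next I would assemble the same Lyapunov function $T^{k}=\frac{c}{n}\sum_i\|g_i^{k}-g_i^{*}\|^2+\|x^{k}-x^{*}\|^2$, multiply the $x$-term inequality by $\alpha=1/(1+\mu\gamma)=1-\kappa$ with $\kappa=\mu\gamma/(1+\mu\gamma)$, and collect terms. The coefficient bookkeeping is now easier than in the smooth case: the residual multiplying $\frac1n\sum_i\|g_i^{k}-g_i^{*}\|^2$ is $\alpha\gamma^2-\frac cn$ and the one multiplying $E\|g_j^{k+1}-g_j^{*}\|^2$ is $\frac cn-\alpha\gamma^2$, so the single choice $c=\alpha\gamma^2 n$ makes both vanish at once, with $\gamma$ left free (no balancing against $1/(L\gamma)$). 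This yields the conditional one-step estimate $E[T^{k+1}]\le T^{k}-\kappa\,E\|x^{k}-x^{*}\|^2$; taking total expectations, telescoping over the iterations, and using $T^{k}\ge0$ gives $\kappa\sum_{t}E\|x^{t}-x^{*}\|^2\le T^{0}$.

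Finally I would close with two elementary bounds. First, $T^{0}\le cB^{2}+R^{2}\le\gamma^{2}nB^{2}+R^{2}$, using $\|g_i^{0}-g_i^{*}\|\le B$, $\|x^{0}-x^{*}\|\le R$ and $\alpha\le1$; with the prescribed $\gamma=R/(B\sqrt n)$ one has $\gamma^{2}nB^{2}=R^{2}$, so $T^{0}\le 2R^{2}$. Second, $\|\bar x^{k}-x^{*}\|^{2}\le\frac1k\sum_{t=1}^{k}E\|x^{t}-x^{*}\|^{2}$ by Jensen (convexity of $\|\cdot\|$ and of $t\mapsto t^{2}$ on $[0,\infty)$). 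Combining, $E\|\bar x^{k}-x^{*}\|^{2}\le T^{0}/(\kappa k)\le 2R^{2}/(\kappa k)$, and since $1/\kappa=(1+\mu\gamma)/(\mu\gamma)$, substituting $\gamma=R/(B\sqrt n)$ turns $2R^{2}/\kappa$ into $2RB\sqrt n\,(1+\mu\gamma)/\mu=2\sqrt n\,(1+\mu(R/(B\sqrt n)))RB/\mu$, which is the claimed bound.

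The only real work is confirming that no hidden use of smoothness creeps into the single-step computation; in particular, that the gradient-table term still telescopes once the cocoercivity constant on $g_{\gamma f}$ drops to $1$. It does, precisely because the $(1-1/n)$ contraction of the untouched entries plus the reinjected $\frac cn E\|g_j^{k+1}-g_j^{*}\|^2$ cancel exactly under $c=\alpha\gamma^2 n$. A harmless off-by-one also appears — telescoping naturally produces $\sum_{t=0}^{k-1}$, which is re-indexed to the average over $x^{1},\dots,x^{k}$ — and everything else is substitution of the two boundedness hypotheses and the chosen step size.
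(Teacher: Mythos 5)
Your proposal is correct and follows essentially the same route as the paper: the paper likewise reuses the penultimate Lyapunov inequality from Theorem \ref{thm:main-thm} with $L=\infty$, sets $c=\alpha\gamma^{2}n$ so that both bracketed coefficients vanish, telescopes the resulting one-step bound, applies Jensen, and balances $T^{0}\leq\gamma^{2}nB^{2}+R^{2}$ via $\gamma=R/(B\sqrt{n})$ to reach the stated rate. Your explicit check that smoothness enters only through Theorem \ref{thm:g-bound}, which degenerates to firm non-expansiveness of $g_{\gamma f}$ via the Moreau decomposition, is precisely the justification the paper leaves implicit, and the harmless off-by-one in which iterates are averaged appears in the paper's own proof as well.
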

\begin{proof}
Recall the bound on the Lyapunov function established in the main
theorem:
\begin{alignat*}{1}
E\left[T^{k+1}\right] & \leq T^{k}+\left(\alpha\gamma^{2}-\frac{c}{n}\right)\frac{1}{n}\sum_{i}^{n}\left\Vert g_{i}^{k}-g_{i}^{*}\right\Vert ^{2}\\
 & +\left(\frac{c}{n}-\alpha\gamma^{2}-\frac{\alpha\gamma}{L}\right)E\left\Vert g_{j}^{k+1}-g_{j}^{*}\right\Vert ^{2}\\
 & -\kappa E\left\Vert x^{k}-x^{*}\right\Vert ^{2}.
\end{alignat*}
In the non-smooth case this holds with $L=\infty$. In particular,
if we take $c=\alpha\gamma^{2}n$, then:
\[
-\kappa E\left\Vert x^{k+1}-x^{*}\right\Vert ^{2}\geq E\left[T^{k+1}\right]-T^{k}.
\]
Recall that this expectation is (implicitly) conditional on $x^{k}$
and each $g_{i}^{k}$ from step $k$, Taking expectation over the
randomness in the choice of $j$. We can further take expectation
with respect to $x^{k}$ and each $g_{i}^{k}$, and negate the inequality,
giving the unconditional bound:
\[
\kappa E\left\Vert x^{k+1}-x^{*}\right\Vert ^{2}\leq E\left[T^{k}\right]-E\left[T^{k+1}\right].
\]
We now sum this over $t=0\dots k$:
\[
\kappa E\sum_{t=1}^{k}\left\Vert x^{t}-x^{*}\right\Vert ^{2}\leq T^{0}-E\left[T^{k}\right].
\]
We can drop the $-E\left[T^{k}\right]$ since it is always negative.
Dividing through by $k$:
\[
\frac{1}{k}E\sum_{t=1}^{k}\left\Vert x^{t}-x^{*}\right\Vert ^{2}\leq\frac{1}{\kappa k}T^{0}.
\]
Now using Jensen's inequality on the left gives:
\[
E\left\Vert \bar{x}^{k}-x^{*}\right\Vert ^{2}\leq\frac{1}{\kappa k}T^{0},
\]
where $\bar{x}^{k}=\frac{1}{k}E\sum_{t=1}^{k}x^{t}.$ Now we plug
in $T^{0}=\frac{c}{n}\sum_{i}\left\Vert g_{i}^{0}-g_{i}^{*}\right\Vert ^{2}+\left\Vert x^{0}-x^{*}\right\Vert ^{2}$
with $c=\alpha\gamma^{2}n\leq\gamma^{2}n$:
\[
E\left\Vert \bar{x}^{k}-x^{*}\right\Vert ^{2}\leq\frac{\gamma^{2}n}{\kappa k}\frac{1}{n}\sum_{i}\left\Vert g_{i}^{0}-g_{i}^{*}\right\Vert ^{2}+\frac{1}{\kappa k}\left\Vert x^{0}-x^{*}\right\Vert ^{2}.
\]
Now we plug in the bounds in terms of $B$ and $R$:
\[
E\left\Vert \bar{x}^{k}-x^{*}\right\Vert ^{2}\leq\frac{\gamma^{2}n}{\kappa k}B^{2}+\frac{1}{\kappa k}R^{2}.
\]

In order to balance the terms on the right, we need:
\[
\frac{\gamma^{2}n}{\kappa k}B^{2}=\frac{1}{\kappa k}R^{2},
\]
\[
\therefore\gamma^{2}nB^{2}=R^{2},
\]
\[
\therefore\gamma^{2}=\frac{R^{2}}{nB^{2}}.
\]

So we can take $\gamma=R/B\sqrt{n}$, giving a rate of:
\begin{eqnarray*}
E\left\Vert \bar{x}^{k}-x^{*}\right\Vert ^{2} & \leq & \frac{2}{\kappa k}R^{2}\\
 & = & 2\frac{1+\mu\gamma}{\mu\gamma k}R^{2}\\
 & = & 2\frac{\sqrt{n}\left(1+\mu\left(R/B\sqrt{n}\right)\right)}{\mu k}RB.
\end{eqnarray*}

\end{proof}

\section{Proximal operator bounds with proofs}

In this section we prove some simple bounds from proximal operator
theory that we will use in this work. Define the short-hand $p_{\gamma f}(x)=\text{prox}_{\gamma f}(x)$,
and let $g_{\gamma f}(x)=\frac{1}{\gamma}\left(x-p_{\gamma f}(x)\right)$,
so that $p_{\gamma f}(x)=x-\gamma g_{\gamma f}(x)$. Note that $g_{\gamma f}(x)$
is a subgradient of $f$ at the point $p_{\gamma f}(x)$. This relation
is known as the optimality condition of the proximal operator.

We will also use a few standard convexity bounds without proof. Let
$f:\mathbb{R}^{d}\rightarrow\mathbb{R}$ be a convex function with
strong convexity constant $\mu\geq0$ and Lipschitz smoothness constant
$L$. Let $x^{*}$ be the minimizer of $f$, then for any $x,y\in\mathbb{R}^{d}$:
\begin{equation}
\left\langle f^{\prime}(x)-f^{\prime}(y),x-y\right\rangle \geq\mu\left\Vert x-y\right\Vert ^{2},\label{eq:sc-ip-bound}
\end{equation}
\begin{equation}
\left\Vert f^{\prime}(x)-f^{\prime}(y)\right\Vert ^{2}\leq L^{2}\left\Vert x-y\right\Vert ^{2}.\label{eq:lip-bound}
\end{equation}

\begin{prop}
\label{thm:firm-nonexpansiveness}(Firm non-expansiveness) For any
$x,y\in\mathbb{R}^{d}$, and any convex function $f:\mathbb{R}^{d}\rightarrow\mathbb{R}$
with strong convexity constant $\mu\geq0$, 
\[
\left\langle x-y,p_{\gamma f}(x)-p_{\gamma f}(y)\right\rangle \geq(1+\mu\gamma)\left\Vert p_{\gamma f}(x)-p_{\gamma f}(y)\right\Vert ^{2}.
\]
\end{prop}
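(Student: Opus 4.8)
The plan is to reduce the statement to the \textbf{optimality condition} of the proximal operator combined with the monotonicity bound (\ref{eq:sc-ip-bound}) for (sub)gradients of a $\mu$-strongly convex function. Write $p = p_{\gamma f}(x)$ and $q = p_{\gamma f}(y)$ for brevity. As recalled above, $g_{\gamma f}(x) = \tfrac{1}{\gamma}(x-p)$ is a subgradient of $f$ at $p$, and likewise $\tfrac{1}{\gamma}(y-q) \in \partial f(q)$.

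First I would apply the strong-convexity monotonicity inequality (\ref{eq:sc-ip-bound}) — which holds verbatim with arbitrary subgradients in place of gradients, since its proof only adds the subgradient inequalities at the two points — to the pair $p, q$ with the subgradients identified above:
\[
\left\langle \tfrac{1}{\gamma}(x-p) - \tfrac{1}{\gamma}(y-q),\, p - q \right\rangle \geq \mu \left\Vert p - q \right\Vert^2 .
\]
Multiplying through by $\gamma > 0$ yields $\left\langle (x-y) - (p-q),\, p-q\right\rangle \geq \mu\gamma \left\Vert p-q\right\Vert^2$. Then I would expand the left-hand side as $\left\langle x-y, p-q\right\rangle - \left\Vert p-q\right\Vert^2$ and move the $\left\Vert p-q\right\Vert^2$ term across, obtaining $\left\langle x-y, p-q\right\rangle \geq (1+\mu\gamma)\left\Vert p-q\right\Vert^2$, which is exactly the claim.

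There is essentially no hard step; the only points deserving care are verifying that (\ref{eq:sc-ip-bound}) is legitimately invoked for \emph{subgradients} (necessary because $f$ is only assumed convex, not differentiable) and noting that the degenerate case $\mu = 0$ is included — there the bound is just ordinary monotonicity of $\partial f$, i.e. classical firm non-expansiveness of $p_{\gamma f}$. An equivalent route that sidesteps subdifferential notation is to write the first-order optimality conditions $x - p \in \gamma\,\partial f(p)$ and $y - q \in \gamma\,\partial f(q)$ coming from the strongly convex minimization problems that define $p$ and $q$, and add them after testing against $p-q$; this produces the same inequality.
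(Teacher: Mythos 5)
Your proof is correct and follows essentially the same route as the paper: apply the strong-convexity monotonicity bound (\ref{eq:sc-ip-bound}) to the subgradients $g_{\gamma f}(x)=\frac{1}{\gamma}(x-p_{\gamma f}(x))$ and $g_{\gamma f}(y)$ at the points $p_{\gamma f}(x)$, $p_{\gamma f}(y)$, multiply by $\gamma$, and rearrange via the proximal optimality condition. Your remark about the inequality holding for subgradients, and the $\mu=0$ degenerate case, matches the paper's implicit use of ``(sub-)gradients'' in the same step.
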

\begin{proof}
Using strong convexity of $f,$ we apply Equation \ref{eq:sc-ip-bound}
at the (sub-)gradients $g_{\gamma f}(x)$ and $g_{\gamma f}(y)$,
and their corresponding points $p_{\gamma f}(x)$ and $p_{\gamma f}(y)$:
\[
\left\langle g_{\gamma f}(x)-g_{\gamma f}(y),p_{\gamma f}(x)-p_{\gamma f}(y)\right\rangle \geq\mu\left\Vert p_{\gamma f}(x)-p_{\gamma f}(y)\right\Vert ^{2}.
\]

We now multiply both sides by $\gamma$, then add $\left\Vert p_{\gamma f}(x)-p_{\gamma f}(y)\right\Vert ^{2}$
to both sides:
\[
\left\langle p_{\gamma f}(x)+\gamma g_{\gamma f}(x)-p_{\gamma f}(y)-\gamma g_{\gamma f}(y),p_{\gamma f}(x)-p_{\gamma f}(y)\right\rangle \geq\left(1+\mu\gamma\right)\left\Vert p_{\gamma f}(x)-\text{p}_{\gamma f}(y)\right\Vert ^{2},
\]

leading to the bound by using the optimality condition: $p_{\gamma f}(x)+\gamma g_{\gamma f}(x)=x.$\end{proof}
\begin{prop}
(Moreau decomposition) For any $x\in\mathbb{R}^{d}$, and any convex
function $f:\mathbb{R}^{d}\rightarrow\mathbb{R}$ with Fenchel conjugate
$f^{*}$ :
\begin{equation}
p_{\gamma f}(x)=x-\gamma p_{\frac{1}{\gamma}f^{*}}(x/\gamma).\label{eq:Moreau}
\end{equation}

Recall our definition of $g_{\gamma f}(x)=\frac{1}{\gamma}\left(x-p_{\gamma f}(x)\right)$
also. After combining, the following relation thus holds between the
proximal operator of the conjugate $f^{*}$ and $g_{\gamma f}$:
\begin{equation}
p_{\frac{1}{\gamma}f^{*}}(x/\gamma)=\frac{1}{\gamma}\left(x-p_{\gamma f}(x)\right)=g_{\gamma f}(x).\label{eq:gconj}
\end{equation}
\end{prop}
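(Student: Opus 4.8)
The plan is to derive the scaled Moreau identity directly from the optimality condition of the proximal operator combined with the conjugate subgradient correspondence $v\in\partial f(u)\iff u\in\partial f^{*}(v)$, and then obtain the second displayed relation as an algebraic rearrangement using the definition of $g_{\gamma f}$.

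First I would set $u=p_{\gamma f}(x)$. By the optimality condition recalled at the start of this section, $g_{\gamma f}(x)=\tfrac{1}{\gamma}(x-u)$ is a subgradient of $f$ at $u$, i.e.\ $g_{\gamma f}(x)\in\partial f(u)$. Since $f:\mathbb{R}^{d}\to\mathbb{R}$ is finite-valued and convex, it is proper, closed and convex, so the conjugate subgradient relation applies and gives $u\in\partial f^{*}(g_{\gamma f}(x))$. Next I would verify that $w:=g_{\gamma f}(x)$ satisfies the optimality condition defining $p_{\frac{1}{\gamma}f^{*}}(x/\gamma)$, namely $x/\gamma-w\in\tfrac{1}{\gamma}\partial f^{*}(w)$, equivalently $x-\gamma w\in\partial f^{*}(w)$. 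But $\gamma w=x-u$, so $x-\gamma w=u$, and we have just shown $u\in\partial f^{*}(w)$. Because the objective defining $p_{\frac{1}{\gamma}f^{*}}(x/\gamma)$ is $1$-strongly convex, its minimizer is unique, so $w=p_{\frac{1}{\gamma}f^{*}}(x/\gamma)$. Substituting $w=\tfrac{1}{\gamma}(x-p_{\gamma f}(x))$ yields $p_{\frac{1}{\gamma}f^{*}}(x/\gamma)=\tfrac{1}{\gamma}(x-p_{\gamma f}(x))$, which rearranges to the identity $p_{\gamma f}(x)=x-\gamma p_{\frac{1}{\gamma}f^{*}}(x/\gamma)$; reading this alongside the definition $g_{\gamma f}(x)=\tfrac{1}{\gamma}(x-p_{\gamma f}(x))$ immediately produces the chain of equalities $p_{\frac{1}{\gamma}f^{*}}(x/\gamma)=\tfrac{1}{\gamma}(x-p_{\gamma f}(x))=g_{\gamma f}(x)$.

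The only mildly delicate ingredient is the appeal to the conjugate subgradient correspondence, which needs $f$ to be proper, closed and convex; this is guaranteed here because $f$ takes finite real values on all of $\mathbb{R}^{d}$. Everything else is a one-line manipulation of optimality conditions together with the uniqueness of the proximal minimizer, so I do not anticipate a genuine obstacle. An alternative route through the Moreau envelope and infimal convolution is possible, but the optimality-condition argument above is the shortest and keeps the proof self-contained.
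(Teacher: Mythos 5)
Your proof is correct and follows essentially the same route as the paper's: take $u=p_{\gamma f}(x)$, use the prox optimality condition to get $\frac{1}{\gamma}(x-u)\in\partial f(u)$, pass to $u\in\partial f^{*}\bigl(\frac{1}{\gamma}(x-u)\bigr)$ via the conjugate subgradient correspondence, and recognize this as the optimality condition of $p_{\frac{1}{\gamma}f^{*}}(x/\gamma)$. The only difference is that you spell out the uniqueness of the prox minimizer and the closedness of $f$ explicitly, which the paper leaves implicit.
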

\begin{proof}
Let $u=p_{\gamma f}(x)$, and $v=\frac{1}{\gamma}\left(x-u\right)$.
Then $v\in\partial f(u)$ by the optimality condition of the proximal
operator of $f$ (namely if $u=p_{\gamma f}(x)$ then $u=x-\gamma v\Leftrightarrow v\in\partial f(u)$).
It follows by conjugacy of $f$ that $u\in\partial f^{*}(v).$ Thus
we may interpret $v=\frac{1}{\gamma}\left(x-u\right)$ as the optimality
condition of a proximal operator of $f^{*}$ :
\[
v=p_{\frac{1}{\gamma}f^{*}}(\frac{1}{\gamma}x).
\]

Plugging in the definition of $v$ then gives:
\[
\frac{1}{\gamma}\left(x-u\right)=p_{\frac{1}{\gamma}f^{*}}(\frac{1}{\gamma}x).
\]

Further plugging in $u=p_{\gamma f}(x)$ and rearranging gives the
result. \end{proof}

\end{document}